\documentclass{article}

\usepackage{microtype}
\usepackage{graphicx}
\usepackage{subcaption}
\usepackage{booktabs} 
\usepackage{multirow} 
\usepackage[table,xcdraw]{xcolor}

\usepackage{hyperref}

\usepackage[accepted]{icml2026}

\usepackage{amsmath}
\usepackage{amssymb}
\usepackage{mathtools}
\usepackage{amsthm}
\usepackage{bm}
\usepackage{thmtools}
\usepackage{thm-restate}
\usepackage{xcolor}

\usepackage[capitalize,noabbrev]{cleveref}

\theoremstyle{plain}
\newtheorem{theorem}{Theorem}[section]
\newtheorem{proposition}[theorem]{Proposition}
\newtheorem{lemma}[theorem]{Lemma}
\newtheorem{corollary}[theorem]{Corollary}
\theoremstyle{definition}
\newtheorem{definition}[theorem]{Definition}

\theoremstyle{remark}
\newtheorem{remark}[theorem]{Remark}

\usepackage[textsize=tiny]{todonotes}

\newcommand{\E}{\mathbb{E}}
\newcommand{\R}{\mathbb{R}}

\newcommand{\thetac}{\bm{\theta}_c}
\newcommand{\thetas}{\bm{\theta}_s}

\definecolor{funcBlue}{RGB}{0, 50, 150}
\definecolor{funcRed}{RGB}{150, 0, 80}
\definecolor{commentGray}{RGB}{100, 100, 100}
\newcommand{\algcomment}[1]{\hfill \textcolor{commentGray}{\scriptsize $\triangleright$ #1}}

\newcommand{\bluefunc}[1]{\textcolor{funcBlue}{\textbf{\textsc{#1}}}}
\newcommand{\redfunc}[1]{\textcolor{funcRed}{\textbf{\textsc{#1}}}}

\icmltitlerunning{Hybrid-Order Split Federated Learning with Backprop-Free Clients and Dimension-Free Aggregation}

\begin{document}
\twocolumn[
  \icmltitle{HO-SFL: Hybrid-Order Split Federated Learning with Backprop-Free Clients and Dimension-Free Aggregation}




\begin{icmlauthorlist}
  \icmlauthor{Qiyuan Chen}{hku}
  \icmlauthor{Xian Wu}{hku}
  \icmlauthor{Yi Wang}{hku}
  \icmlauthor{Xianhao Chen}{hku}
\end{icmlauthorlist}

\icmlaffiliation{hku}{Department of Electrical and Computer Engineering, The University of Hong Kong, Hong Kong SAR, China}

\icmlcorrespondingauthor{Xianhao Chen}{xcheneee@hku.hk}

  \icmlkeywords{Machine Learning, ICML}

  \vskip 0.3in
]



\printAffiliationsAndNotice{}  

\begin{abstract}
Fine-tuning large models on edge devices is severely hindered by the memory-intensive backpropagation (BP) in standard frameworks like federated learning and split learning. While substituting BP with zeroth-order optimization can significantly reduce memory footprints, it typically suffers from prohibitively degraded convergence speed. To resolve this dilemma, we propose \emph{Hybrid-Order Split Federated Learning} (HO-SFL). By reformulating the split learning process within a Lagrangian framework, HO-SFL decouples the optimization landscape: The server performs precise first-order updates (i.e., BP), whereas clients conduct memory-efficient zeroth-order optimization. This hybrid design not only eliminates the need for client-side BP but also enables dimension-free model aggregation, drastically lowering communication costs. Crucially, we provide a theoretical convergence analysis, demonstrating that HO-SFL mitigates the dimension-dependent convergence slowdown of zeroth-order optimization, achieving a convergence rate comparable to first-order methods. Extensive experiments on tasks across vision and language modalities validate that HO-SFL achieves convergence speeds comparable to first-order baselines while significantly reducing communication costs and client memory footprints.
\end{abstract}

\section{Introduction}
\label{sec:intro}

Driven by the imperative needs for task-specific-adaptation and data privacy,  fine-tuning large models on edge devices—where data is generated—has emerged as a critical demand in the era of artificial intelligence ~\cite{malladi2023fine,peng2024pocketllm}. Unlike centralized training, which requires aggregating user data, edge learning paradigms enable collaborative model adaptation without compromising user privacy~\cite{lim2020federated,sani2025photon}.

However, the resource-constrained nature of edge devices poses significant challenges to existing distributed learning frameworks~\cite{lv2024full,wang2024flora}. The most prominent framework, federated learning (FL)~\cite{mcmahan2017communication}, necessitates that clients perform complete backpropagation (BP) to compute gradients locally. To enable the training of larger models, split federated learning (SFL)~\cite{thapa2022splitfed} was proposed to offload a substantial portion of the model computation to a resource-rich server. Nevertheless, standard SFL and most variants still require clients to execute BP on the client-side sub-model to compute gradients for updates~\cite{oh2025communication, nair2025fslsage}. For modern large language models (LLMs) with billions or even trillions of parameters, the memory overhead of storing activations for BP far exceeds the capacity of typical edge hardware, even when the client-side sub-model consists of only a few layers~\cite{lin2024splitlora,dettmers2023qlora}. Consequently, the memory bottleneck remains a formidable barrier for LLM deployment on edge devices.

As a memory-efficient alternative to BP, zeroth-order (ZO) optimization~\cite{spall2002multivariate} has garnered increasing attention~\cite{liu2020primer}. By estimating gradients using only forward passes, ZO optimization eliminates the need to store intermediate activations, drastically reducing memory footprints. Several works have attempted to integrate ZO techniques into FL and SFL to achieve ``BP-free'' client training ~\cite{fang2022communication, ICLR2025_3b62aa13, liang2026towards}. However, existing distributed ZO methods often suffer from slow convergence rates compared to first-order methods due to the high variance of gradient estimators and the lack of precise gradient information, particularly in high-dimensional parameter spaces~\cite{shamir2017optimal,ma2025revisiting}. Therefore, a critical research question arises: 

\textit{Can we enable BP-free client training in distributed learning without inheriting the high-dimensional slowdown of zeroth-order optimization?}

We answer this question affirmatively by proposing \emph{Hybrid-Order Split Federated Learning} (HO-SFL).
By treating the consistency between client-side outputs and server-side inputs as an equality constraint, we decouple the SFL optimization landscape into two distinct local sub-problems via a Lagrangian framework.
In this decoupled regime, the server performs BP to update its parameters and transmits the activation gradients to the clients.
These gradients define a local proxy objective for the clients, enabling them to perform memory-efficient ZO updates that align with the global optimization direction. 
In this way, HO-SFL can leverage the server's computational power for precise first-order updates to maintain high convergence speed, while enabling clients to perform memory-efficient ZO training without the burden of BP.
Furthermore, by leveraging shared randomness, HO-SFL achieves dimension-free model aggregation~\cite{ICLR2025_3b62aa13}.
Specifically, instead of transmitting high-dimensional model parameters, clients need only upload a few scalars to achieve client-side model aggregation, thereby significantly reducing communication overhead compared to other SFL methods.

Our main contributions are summarized as follows:
\begin{itemize}
    \item We propose HO-SFL, a novel framework that enables BP-free training on clients while maintaining convergence speeds comparable to first-order methods. By reformulating the split learning process within a Lagrangian framework, the server performs first-order updates via BP, while clients leverage the back-transmitted gradients to execute ZO updates. Moreover, HO-SFL enables dimension-free model aggregation, drastically lowering communication costs.
    \item We theoretically analyze HO-SFL and prove that it achieves a convergence rate of $\mathcal{O}(\sqrt{d_c/PT})$, which effectively mitigates the dimension-dependent convergence slowdown inherent in ZO optimization.
    Our analysis establishes a theoretical characterization of the dynamics of hybrid-order optimization, providing a rigorous foundation for integrating first-order and zeroth-order optimization methods within a framework.
    \item We conduct experiments across vision and language modalities to validate our claims. Our results demonstrate that HO-SFL achieves convergence speed comparable to first-order baselines, while reducing client memory usage to inference-only level and minimizing communication cost via dimension-free aggregation.
\end{itemize}
\section{Related Work}
\label{sec:related_work}
In this section, we review the literature relevant to our work across three primary dimensions: advances in SFL, the evolution of ZO optimization, and recent works integrating ZO into distributed frameworks.

\subsection{Advances in Split Federated Learning}
Split Federated Learning combines the parallel training of FL with the model splitting of Split Learning (SL)~\cite{gupta2018distributed,vepakomma2018split} to reduce client-side computational load. Recent works have focused on optimizing communication efficiency and training latency in SFL~\cite{wu2023split1}. SplitFC ~\cite{oh2025communication} introduces adaptive feature-wise compression, utilizing dropout and quantization strategies based on feature dispersion to reduce communication overhead. To address the high latency caused by the sequential server-side updates in SFL, FSL-SAGE~\cite{nair2025fslsage} employs auxiliary models on clients to estimate server gradients, enabling parallel client training. Furthermore, GAS~\cite{yang2025gas} introduces an asynchronous SFL framework with generative activation buffers to mitigate the impact of stragglers and biased updates caused by asynchronous transmissions. Unlike these works, which primarily focus on compression or asynchrony while retaining client-side BP, our work focuses on eliminating client-side BP entirely through a hybrid optimization approach.

\subsection{Zeroth-Order Optimization}
ZO optimization estimates gradients using function value differences, avoiding the memory-intensive storage of activation graphs required by BP~\cite{duchi2015optimal,nesterov2017random}. A landmark advancement in this domain is MeZO~\cite{malladi2023fine}, which adapts ZO-SGD to fine-tune LLMs with memory footprints equivalent to inference. Theoretical advancements have also been made to improve ZO estimators. For instance, \citet{hikima2025guided} leverages partial gradient information to shape the covariance matrix of perturbations for more effective update directions, while \citet{ma2026optimal} introduces a telescoping series framework to construct unbiased gradient estimators, effectively eliminating the approximation bias inherent in traditional finite-difference methods. While these methods demonstrate the efficacy of ZO in centralized settings, applying them directly to distributed environments without structural adaptation often leads to suboptimal convergence.

\subsection{Zeroth-Order in Distributed Learning}
Recent research has begun to explore the intersection of ZO optimization with distributed learning frameworks~\cite{yi2022zeroth}. FedZO~\cite{fang2022communication} integrates ZO optimization into FL, enabling clients to update models using stochastic gradient estimators without BP. DeComFL~\cite{ICLR2025_3b62aa13} leverages ZO optimization to achieve dimension-free communication in FL by transmitting random seeds and scalars instead of high-dimensional model parameters. In the context of SFL, MU-SplitFed \cite{liang2026towards} combines ZO optimization with an unbalanced update strategy, where the server performs multiple update steps for every client step to mitigate straggler effects. However, these methods suffer from the high variance of ZO gradient estimators, resulting in a convergence rate significantly slower than first-order(FO) methods~\cite{ajalloeian2019stochastic}. Hybrid decentralized optimization~\cite{talaei2025hybrid} considers a decentralized network where some nodes use FO gradients and other nodes use ZO estimators. In contrast, HO-SFL hybridizes at the model-split level in SFL, which not only enables BP-free training on clients to minimize memory costs but also achieves a convergence rate comparable to first-order methods.

\section{Methodology}
\label{sec:methodology}

In this section, we provide the theoretical motivation and the algorithmic design of the proposed \emph{Hybrid-Order Split Federated Learning} (HO-SFL).
We reformulate the standard SFL training process through the lens of constrained optimization, which naturally leads to a decoupled learning strategy where the server performs standard BP while the client employs ZO optimization.

\subsection{System Model}
We consider an SFL system with one central server and $M$ clients, indexed by $m \in \{1, \dots, M\}$.
The model is split into a client-side network $f_c(\cdot;\thetac)$ and a server-side network $f_s(\cdot;\thetas)$, with parameters $\boldsymbol{\theta}:=[\thetac;\thetas]\in\R^{d_c+d_s}$.
For clarity, we present the system model using a single-sample formulation.
We define a data sample as $\xi_{m}=(\boldsymbol{x}_m, y_m)$ drawn from client $m$'s local distribution $\mathcal{D}_m$.
Client $m$ computes the activation $\boldsymbol{z}_m$ via the client-side model $f_c$:
\begin{equation}
\label{eq:client_forward}
    \boldsymbol{z}_m = f_c(\boldsymbol{x}_m; \thetac) \in \R^{D}.
\end{equation}

Upon receiving $\boldsymbol{z}_m$, the server completes the forward pass to obtain $\hat{y}_m = f_s(\boldsymbol{z}_m; \thetas)$ and evaluates the task loss
\begin{equation}
\label{eq:batch_loss}
    \ell(\boldsymbol{\theta};\xi_m) := \ell(\hat{y}_m,y_m).
\end{equation}
For each client $m$, we define the local objective function as
\begin{equation}
\label{eq:local_obj}
    \mathcal{L}_m(\boldsymbol{\theta}) := \E_{\xi_m \sim \mathcal{D}_m} \big[ \ell(\boldsymbol{\theta}; \xi_m) \big]
\end{equation}
and formulate the global optimization problem as
\begin{equation}
\label{eq:global_obj}
    \mathcal{L}(\boldsymbol{\theta}) := \frac{1}{M} \sum_{m=1}^M \mathcal{L}_m(\boldsymbol{\theta}).
\end{equation}
It is worth noting that while we use uniform weights here for simplicity, this formulation can be trivially generalized to weighted averaging strategies,  such as weighting based on the size of each client's dataset.

\subsection{Theoretical Motivation: A Lagrangian Perspective}
While the goal is to minimize the global loss $\mathcal{L}(\boldsymbol{\theta})$, standard SFL addresses this by directly targeting the nested composite function $\ell(f_s(f_c(\boldsymbol{x}; \boldsymbol{\theta}_c); \boldsymbol{\theta}_s), y)$ as the immediate optimization objective for each data sample $(\boldsymbol{x}, y)$.
Unfortunately, this specific per-step formulation creates a \textit{functional coupling} between the client and server parameters. This dependency constrains the system to a unified optimization paradigm, enforcing that both clients and the server simultaneously employ either BP or ZO methods.
To decouple the optimization landscapes of the server and client, we use the \textbf{variable lifting}~\cite{carreira2014distributed,wang2023lifted} technique.
Specifically, we treat the activations $\boldsymbol{z}$ as an auxiliary variable and reformulate the objective as
\begin{equation}
\begin{aligned}
      \ell&(f_s(\boldsymbol{z}; \boldsymbol{\theta}_s), y) \\
     \text{s.t.} &\quad \boldsymbol{z} = f_c(\boldsymbol{x}; \boldsymbol{\theta}_c).
\end{aligned}
\end{equation}
For this constrained problem, we construct the Lagrangian function $\mathcal{L}_{\lambda}$ by introducing a Lagrange multiplier $\boldsymbol{\lambda}$
\begin{equation}
\label{eq:lagrangian}
    \mathcal{L}_{\lambda}(\boldsymbol{\theta}_s, \boldsymbol{\theta}_c) = \ell(f_s(\boldsymbol{z}; \boldsymbol{\theta}_s), y)  + \boldsymbol{\lambda}^\top (f_c(\boldsymbol{x}; \boldsymbol{\theta}_c) - \boldsymbol{z}).
\end{equation}
This formulation allows us to decompose the global objective into two sub-problems as follows.

\textbf{Server-Side Objective ($\mathcal{L}_s$).} Focusing on terms involving $\boldsymbol{\theta}_s$, the server's objective can be rearranged as
\begin{equation}
    \mathcal{L}_s(\boldsymbol{\theta}_s) = \ell(f_s(\boldsymbol{z}; \boldsymbol{\theta}_s), y).
\end{equation}

\textbf{Client-Side Objective ($\mathcal{L}_c$).} Focusing on terms involving $\boldsymbol{\theta}_c$, the client's objective is guided by the multiplier, which is given by
\begin{equation}
\label{eq:client_obj}
    \mathcal{L}_c(\boldsymbol{\theta}_c) = \boldsymbol{\lambda}^\top f_c(\boldsymbol{x}; \boldsymbol{\theta}_c).
\end{equation}

\paragraph{Derivation of State Variables.}
To solve this, we need to determine the values of $\boldsymbol{z}$ and $\boldsymbol{\lambda}$ at each communication round $t$.
First, to enforce consistency,  we anchor the auxiliary variable $\boldsymbol{z}$ to the current client forward output
\begin{equation}
    \boldsymbol{z}^t = f_c(\boldsymbol{x}; \boldsymbol{\theta}_c^t).
\end{equation}
Second, the optimal Lagrange multiplier $\boldsymbol{\lambda}^t$ is determined by the stationarity condition $\nabla_{\boldsymbol{z}} \mathcal{L}_{\lambda} = \mathbf{0}$ from (\ref{eq:lagrangian})
\begin{equation}
\label{eq:lambda_stationary}
    \nabla_{\boldsymbol{z}}\ell\big(f_s(\boldsymbol{z};\boldsymbol{\theta}_s^t),y\big)
    - \boldsymbol{\lambda}^t = \mathbf{0}.
\end{equation}
Rearranging the terms and substitute $\boldsymbol{z}$ with $\boldsymbol{z}^t$ yields:
\begin{equation}
    \boldsymbol{\lambda}^t = \nabla_{\boldsymbol{z}} \ell(f_s(\boldsymbol{z}^t; \boldsymbol{\theta}_s^t), y).
\end{equation}
This confirms that the Lagrange multiplier $\boldsymbol{\lambda}^t$ corresponds exactly to the standard gradient of the loss with respect to the input activations.

\begin{figure}[t]
    \centering
    \includegraphics[width=\linewidth]{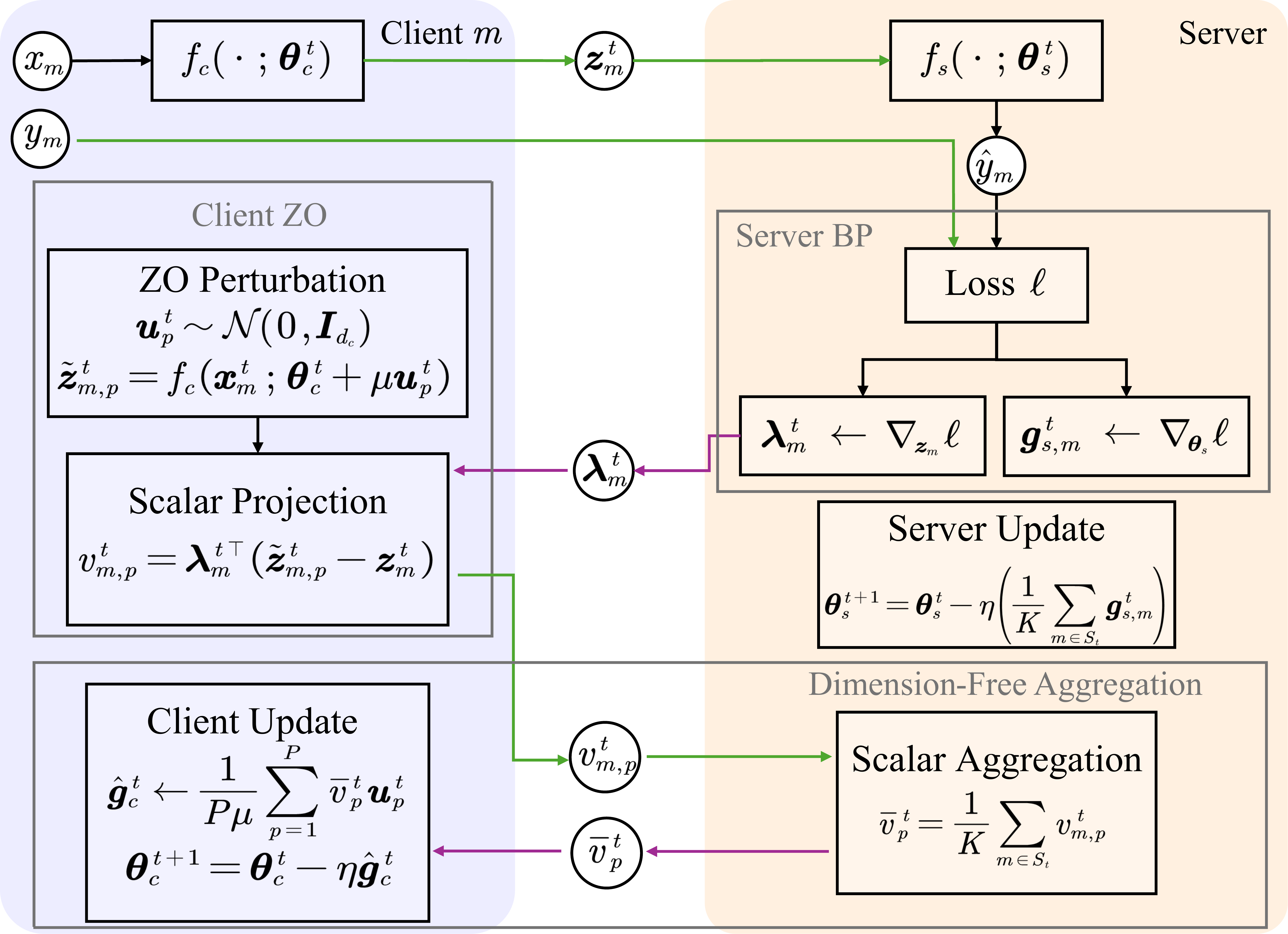}
    \caption{\textbf{Overview of the HO-SFL training loop.}
    Each selected client $m$ computes an activation $\boldsymbol{z}_m^t=f_c(\boldsymbol{x}_m;\boldsymbol{\theta}_c^t)$ and sends $(\boldsymbol{z}_m^t,y_m)$ to the server.
    The server performs BP to update $\boldsymbol{\theta}_s$ and returns the activation-gradient feedback $\boldsymbol{\lambda}_m^t=\nabla_{\boldsymbol{z}_m}\ell$.
    In parallel, the client runs $P$ ZO perturbation forward passes $\tilde{\boldsymbol{z}}_{m,p}^t=f_c(\boldsymbol{x}_m;\boldsymbol{\theta}_c^t+\mu\boldsymbol{u}_p^t)$ and computes scalar projections
    $v_{m,p}^t=\boldsymbol{\lambda}_m^{t\top}(\tilde{\boldsymbol{z}}_{m,p}^t-\boldsymbol{z}_m^t)$.
    The server then aggregates these scalars into $\bar{v}_p^t=\frac{1}{K}\sum_{m\in\mathcal{S}_t}v_{m,p}^t$, which are broadcast back for clients to construct $\hat{\boldsymbol{g}}_c^t=\frac{1}{P\mu}\sum_{p=1}^P\bar{v}_p^t\boldsymbol{u}_p^t$ and update $\boldsymbol{\theta}_c$.}
    \label{fig:hosfl_overview}
\end{figure}

\begin{algorithm}[tb]
   \caption{Hybrid-Order Split Federated Learning}
   \label{alg:l_sfl_main}
   \begin{algorithmic}[1]
   \STATE \textbf{Initialize:}    Model parameters $\boldsymbol{\theta}_c^0$ and $\boldsymbol{\theta}_s^0$; Learning rate $\eta$; Perturbation count $P$; Smoothing parameter $\mu$
   \STATE \textbf{Allocate:} Timestamp map $\{t'_m\}_{m=1}^M \leftarrow 0$; History buffers for seeds $\{\{s_p^\tau\}_{p=1}^P\}_\tau$ and scalars $\{\{\bar{v}_p^\tau\}_{p=1}^P\}_\tau$
   
   \FOR{round $t = 0, \dots, T-1$}
       \STATE Server samples set $\mathcal{S}_t$; broadcast seeds $\{s_p^t\}_{p=1}^P$

       \STATE \textcolor{commentGray}{// Phase 1: Synchronization \& Forward}

       \FOR{each client $m \in \mathcal{S}_t$ \textbf{in parallel}}
           \STATE $t'_m \leftarrow$ \redfunc{ClientSync}$(t, t'_m)$ 
           \STATE Sample $\xi_m=(\boldsymbol{x}_m, y_m)$
           \STATE Compute activation $\boldsymbol{z}_m^t \leftarrow f_c(\boldsymbol{x}_m; \boldsymbol{\theta}_c^t)$
           \STATE Client sends $(\boldsymbol{z}_m^t, y_m)$ to Server
       \ENDFOR
       
       \STATE \textcolor{commentGray}{// Phase 2: Server First-Order Update}
       \FOR{each client $m \in \mathcal{S}_t$}
           \STATE Compute $\hat{y}_m \leftarrow f_s(\boldsymbol{z}_m^t; \boldsymbol{\theta}_s^t)$ and loss $\ell$
           \STATE Compute gradients $\boldsymbol{g}_{s,m}^t \leftarrow \nabla_{\boldsymbol{\theta}_s} \ell$
           \STATE Compute feedback \textcolor{blue}{$\boldsymbol{\lambda}_m^t \leftarrow \nabla_{\boldsymbol{z}_m^t} \ell$}
       \ENDFOR
       \STATE Server update $\boldsymbol{\theta}_s^{t+1} \leftarrow \boldsymbol{\theta}_s^t - \eta \frac{1}{K} \sum_{m \in \mathcal{S}_t} \boldsymbol{g}_{s,m}^t$
       \STATE Server send $\boldsymbol{\lambda}_m^t$ to client $m$ for $m \in \mathcal{S}_t$

       \STATE \textcolor{commentGray}{// Phase 3: Client Zeroth-Order Projection}
       
       \FOR{each client $m \in \mathcal{S}_t$ \textbf{in parallel}}
           \STATE $\{v_{m,p}^t\}_{p=1}^P \leftarrow$ \bluefunc{ZOScalar}$(\boldsymbol{\theta}_c^t, \boldsymbol{\lambda}_m^t, \{s_p^t\}_{p=1}^P)$ 
           \STATE Send scalars $\{v_{m,p}^t\}_{p=1}^P$ to Server
       \ENDFOR
       
       \STATE \textcolor{commentGray}{// Phase 4: Client Aggregation \& Update}
       \STATE Server  broadcasts $\{\bar{v}_p^t \leftarrow \frac{1}{K}\sum_{m \in \mathcal{S}_t} v_{m,p}^t\}_{p=1}^P$
       \STATE Clients compute $\hat{\boldsymbol{g}}_c^t \leftarrow \frac{1}{P\mu}\sum_{p=1}^P \bar{v}_p^t \boldsymbol{u}_p^t$
       \STATE Clients update $\boldsymbol{\theta}_c^{t+1} \leftarrow \boldsymbol{\theta}_c^t - \eta \hat{\boldsymbol{g}}_c^t$
   \ENDFOR
   \end{algorithmic}
\end{algorithm}

\subsection{The Proposed HO-SFL Framework}

Based on the analysis above, we propose HO-SFL. The core training loop and data flow are illustrated in Figure~\ref{fig:hosfl_overview}, while the detailed procedural steps are summarized in Algorithm~\ref{alg:l_sfl_main}. Specifically, the server focuses on $\mathcal{L}_s$, updating the server-side parameters $\thetas$ via BP while simultaneously deriving the Lagrange multiplier $\boldsymbol{\lambda}_m^t$ as consistency feedback.
In parallel, each of the $K$ sampled clients leverages the feedback to optimize the client-side objective $\mathcal{L}_c$, through a memory-efficient ZO estimator. Furthermore, by leveraging shared random seeds, clients enable dimension-free aggregation by transmitting only scalars. In the following, we detail the specific steps of HO-SFL during each iteration.

\paragraph{Client Sampling and Forward.}
At the beginning of communication round $t$, the server samples a subset $\mathcal{S}_t\subseteq\{1,\dots,M\}$ with $|\mathcal{S}_t|=K$.
Each sampled client $m$ performs the forward pass on a data sample $\xi_m=(\boldsymbol{x}_m, y_m)$ using its current parameters $\thetac^t$ to generate $\boldsymbol{z}_m^t$ as defined in Eq.~(\ref{eq:client_forward}).
The client transmits $\boldsymbol{z}_m^t$ and label $y_m$ to the server.

\begin{algorithm}[tb]
   \caption{Function: ZO Scalar}
   \label{alg:zo_proj}
   \begin{algorithmic}[1]
   \STATE \textbf{Function} \bluefunc{ZOScalar}$(\boldsymbol{\theta}_c, \boldsymbol{\lambda}, \{s_p\}_{p=1}^P)$:
   \FOR{$p = 1, \dots, P$}
       \STATE Generate perturbation $\boldsymbol{u}_p \sim \mathcal{N}(\mathbf{0}, \mathbf{I})$ from seed $s_p$
       \STATE Perturbed forward: $\tilde{\boldsymbol{z}}_{p} \leftarrow f_c(\boldsymbol{x}; \boldsymbol{\theta}_c + \mu\boldsymbol{u}_p)$
       \STATE Compute projection: \textcolor{blue}{$v_{p} \leftarrow \boldsymbol{\lambda}^\top (\tilde{\boldsymbol{z}}_{p} - \boldsymbol{z})$} \algcomment{Eq. \ref{eq:scalar_proj}}
   \ENDFOR
   \STATE \textbf{Return} Scalars $\{v_{p}\}_{p=1}^P$
   \end{algorithmic}
\end{algorithm}

\begin{algorithm}[tb]
   \caption{Function: Client Synchronization}
   \label{alg:sync}
   \begin{algorithmic}[1]
   \STATE \textbf{Function} \redfunc{ClientSync}$(t, t')$:
   \IF{$t' < t$}
       \STATE Fetch history $\{(s_p^\tau, \bar{v}_p^\tau)\}_{p=1}^P$ for $\tau \in \{t', \dots, t-1\}$
       \FOR{missed round $\tau = t', \dots, t-1$}
           \STATE Regenerate $\boldsymbol{u}_p^\tau \leftarrow \text{PRG}(s_p^\tau)$ for $p \in \{1,\dots,P\}$
           \STATE Reconstruct grad $\hat{\boldsymbol{g}}_c^\tau \leftarrow \frac{1}{P\mu}\sum_{p=1}^P \bar{v}_p^\tau \boldsymbol{u}_p^\tau$
           \STATE $\boldsymbol{\theta}_c^{\tau+1} \leftarrow \boldsymbol{\theta}_c^{\tau} - \eta \hat{\boldsymbol{g}}_c^\tau$ \algcomment{Sequential Catch-up}
       \ENDFOR
   \ENDIF
   \STATE \textbf{Return} $t$ \algcomment{Update timestamp}
   \end{algorithmic}
\end{algorithm}

\paragraph{Server-Side Backpropagation.}
Upon receiving the data $\{(\boldsymbol{z}_m^t,y_m^t)\}_{m\in\mathcal{S}_t}$, the server completes the forward propagation and computes the loss using Eq.~(\ref{eq:batch_loss}).
Then, the server performs standard backpropagation to compute
\begin{align}
    \boldsymbol{g}_{s,m}^t &= \nabla_{\boldsymbol{\theta}_s}\ell(f_s(\boldsymbol{z}_m^t; \boldsymbol{\theta}_s^t), y_m), \quad \forall m \in \mathcal{S}_t, \\
    \boldsymbol{\lambda}_m^t &= \nabla_{\boldsymbol{z}_m^t}\ell\big(f_s(\boldsymbol{z}_m^t;\boldsymbol{\theta}_s^t),y_m^t\big), \quad \forall m \in \mathcal{S}_t.
\end{align}
Note that $\boldsymbol{g}_{s,m}^t$ and $\boldsymbol{\lambda}_m^t$ are obtained from a regular BP process, and no extra computing cost is incurred. The server updates its parameter by
\begin{equation}
    \boldsymbol{\theta}_s^{t+1} = \boldsymbol{\theta}_s^t - \eta ( \frac{1}{K} \sum_{m \in \mathcal{S}_t} \boldsymbol{g}_{s,m}^t ),
\end{equation}
and transmit gradient $\boldsymbol{\lambda}_m^t$ back to each client.

\paragraph{Client-Side Zeroth-Order Projection.}
To update $\thetac$ without BP, we employ a ZO estimator to minimize the client-side objective $\mathcal{L}_c$.
First, using shared random seeds $\{s_p^t\}_{p=1}^P$, clients generate Gaussian perturbation vectors:
\begin{equation}
\label{eq:zo_dir}
    \boldsymbol{u}_p^t \sim \mathcal{N}(\boldsymbol{0}, \boldsymbol{I}_{d_c}),\quad p\in\{1,\dots,P\}.
\end{equation}
Second, each client computes perturbed activations:
\begin{equation}
\label{eq:perturbed_act}
    \tilde{\boldsymbol{z}}_{m,p}^t = f_c(\boldsymbol{x}_m^t;\thetac^t+\mu\boldsymbol{u}_p^t),\quad p\in\{1,\dots,P\},
\end{equation}
where $\mu$ is a smoothing parameter.
Third, each client computes a scalar projection $v_{m,p}^t$ representing the finite difference of the local objective function $\mathcal{L}_c$.
Recall that the local objective is defined as $\mathcal{L}_c(\boldsymbol{\theta}_c) = \boldsymbol{\lambda}^\top f_c(\boldsymbol{x}; \boldsymbol{\theta}_c)$.
Setting the unperturbed activation $\boldsymbol{z}_m^t$ as the anchor variable $\boldsymbol{z}$, the difference is calculated as:
\begin{equation}
\label{eq:scalar_proj}
\begin{split}
    v_{m,p}^t &= \mathcal{L}_c(\boldsymbol{\theta}_c^t + \mu \boldsymbol{u}_p^t) - \mathcal{L}_c(\boldsymbol{\theta}_c^t) \\
    &= \boldsymbol{\lambda}_m^t{}^\top (\tilde{\boldsymbol{z}}_{m,p}^t - \boldsymbol{z}_m^t).
\end{split}
\end{equation}
This term estimates the directional derivative of the client's output projected onto the loss gradient direction provided by the server. It is important to note that no uplink communication costs are incurred during the above process.

\paragraph{Global Gradient Reconstruction.}
Each sampled client transmits ${v_{m,p}^t}_{p=1}^P$ to the server, which aggregates them across clients:
\begin{equation}
\label{eq:scalar_agg}
    \bar{v}_p^t=\frac{1}{K}\sum_{m\in\mathcal{S}_t} v_{m,p}^t,\quad p\in\{1,\dots,P\},
\end{equation}
and broadcasts $\{\bar{v}_p^t\}_{p=1}^P$.
Each client constructs:
\begin{equation}
\label{eq:grad_recon}
    \hat{\boldsymbol{g}}_c^t=\frac{1}{P\mu}\sum_{p=1}^P \bar{v}_p^t\boldsymbol{u}_p^t,
\end{equation}
followed by the client-side update: 
\begin{equation}
\label{eq:client_update}
    \thetac^{t+1}=\thetac^t-\eta\hat{\boldsymbol{g}}_c^t.
\end{equation}

It is worth noting that clients only need to upload the scalars $\{v_{m,p}^t\}_{p=1}^P$ for aggregation. In contrast to standard SFL, which mandates the transmission of the full parameter vector scaling with dimension $d_c$, our approach achieves dimension-free uplink communication, thereby significantly reducing the bandwidth cost.

\paragraph{Client Synchronization Strategy}
To accommodate stateless clients that do not participate in every round, HO-SFL incorporates a lightweight synchronization mechanism as outlined in Algorithm~\ref{alg:sync}. Specifically, for a client with a stale model $\boldsymbol{\theta}_c^{t'}$ from round $t' < t$, it requests the server's history of broadcast tuples $\{(s_p^\tau, \bar{v}_p^\tau)\}_{p=1}^P$ corresponding to the missed rounds $\tau \in \{t', \dots, t-1\}$. The client then iteratively catches up to the current global state by first regenerating the historical perturbation vectors from the random seeds:
\begin{equation}
    \boldsymbol{u}_p^\tau = \text{PRG}(s_p^\tau).
\end{equation}
It then combines them with the scalar projections to reconstruct the gradient estimates:
\begin{equation}
    \hat{\boldsymbol{g}}_c^\tau = \frac{1}{P\mu}\sum_{p=1}^P \bar{v}_p^\tau \boldsymbol{u}_p^\tau.
\end{equation}
These reconstructed gradients are then used to sequentially apply the historical updates:
\begin{equation}
    \boldsymbol{\theta}_c^{\tau+1} \leftarrow \boldsymbol{\theta}_c^{\tau} - \eta \hat{\boldsymbol{g}}_c^\tau,
\end{equation}
aligning the local model with $\boldsymbol{\theta}_c^t$ without the need to download high-dimensional model parameters.

\subsection{Discussions: Efficiency of HO-SFL}
\label{subsec:efficiency}

\begin{figure}[t]
    \centering
    \begin{subfigure}[b]{1.0\linewidth}
        \centering
        \includegraphics[width=\linewidth]{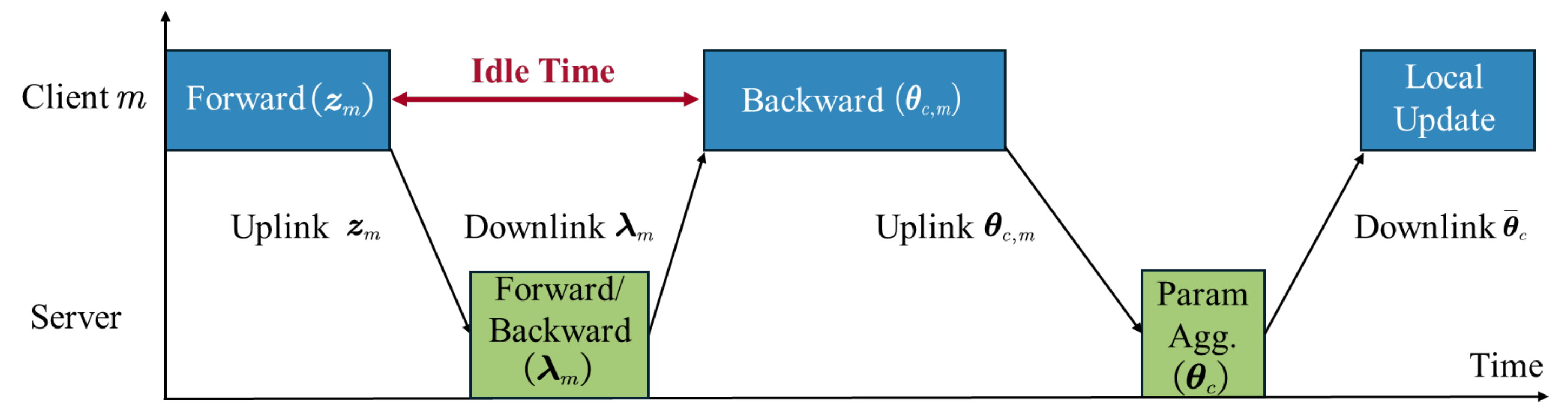} 
        \caption{Standard SFL Timeline}
        \label{fig:sfl_timeline}
    \end{subfigure}
    \begin{subfigure}[b]{1.0\linewidth}
        \centering
        \includegraphics[width=\linewidth]{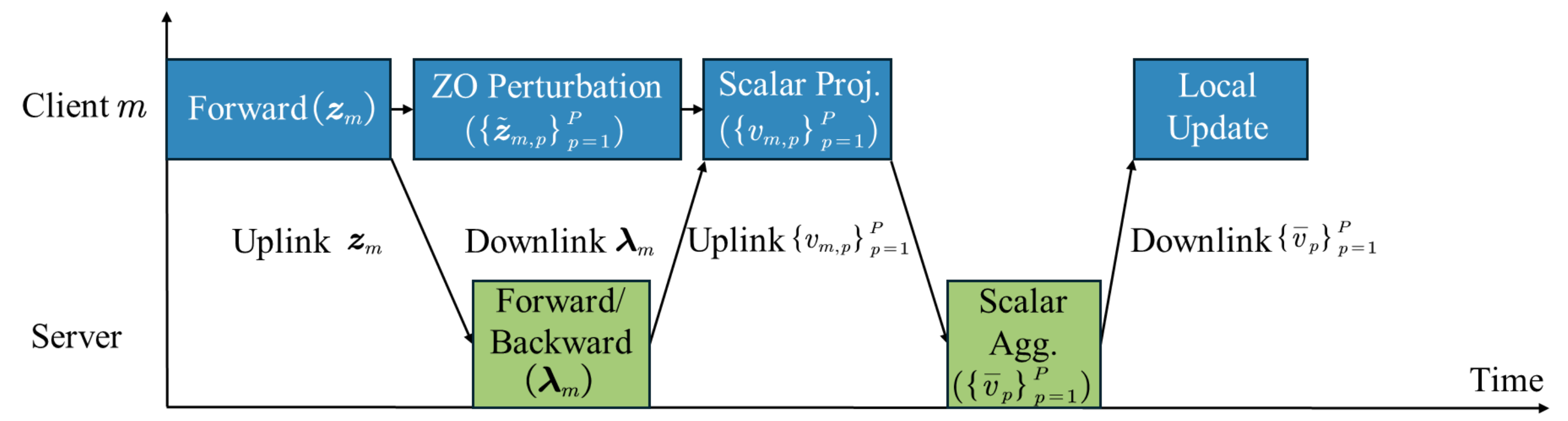} 
        \caption{HO-SFL Timeline}
        \label{fig:ho_sfl_timeline}
    \end{subfigure}
    
    \caption{Comparison of system efficiency. (a) Standard SFL incurs significant idle time on the client side waiting for gradients. (b) HO-SFL effectively masks the computational cost of multiple client-side zeroth-order perturbations by overlapping them with the server's backpropagation and communication processes.}
    \label{fig:efficiency_analysis}
\end{figure}

While the primary motivation of HO-SFL is to overcome the memory barrier of clients, we also highlight two \textit{extra} advantages of HO-SFL concerning communication and latency efficiency (as illustrated in Figure~\ref{fig:efficiency_analysis}).

\textbf{Dimension-Free Aggregation.} 
Regarding client-side aggregation, HO-SFL significantly alleviates the communication bottleneck. 
In standard SFL, clients transmit high-dimensional model parameters whose size scales linearly with the client-side model dimension $d_c$, resulting in a communication complexity of $\mathcal{O}(d_c)$. 
In contrast, HO-SFL enables dimension-free aggregation by requiring each client to transmit only $P$ scalars $\{v_{m,p}\}_{p=1}^P$ per communication round, where $P$ is independent of the model architecture and typically satisfies $P \ll d_c$. Thus, the communication cost for model aggregation is reduced to $\mathcal{O}(P)$.

\textbf{Latency Hiding via Decoupling.} 
In standard SFL, clients must remain idle while waiting for the activation gradient from the server. Benefiting from the decoupling strategy, HO-SFL can effectively overlap both computation and communication: the client-side perturbation can be overlapped with server-side forward/backward as well as the activation uplink and gradient downlink processes, thereby potentially shortening end-to-end latency. We conducted a simulation analysis based on the LLaMA-3.2-1B~\cite{grattafiori2024llama} under realistic edge computing conditions and validated the feasibility of latency hiding in Appendix~\ref{sec:latency_analysis}.

\section{Convergence Analysis}
\label{sec:convergence}

In this section, we establish the theoretical guarantees for the proposed HO-SFL framework.
Unlike standard SFL, HO-SFL involves a hybrid optimization landscape where the server performs first-order updates while clients perform ZO updates.
We analyze the convergence properties of HO-SFL under non-convex settings and quantify the impact of the zeroth-order approximation error on the global convergence rate.
Detailed proofs are provided in Appendix~\ref{appendix:proofs}.

\subsection{Assumptions}
We adopt standard assumptions widely used in the analysis of non-convex federated optimization~\cite{han2024convergence}.

\begin{restatable}[Smoothness]{assumption}{restateSmoothness}
\label{ass:smoothness}
The global objective $\mathcal{L}(\boldsymbol{\theta})$ is $\beta$-smooth, i.e., for any $\boldsymbol{\theta}_1,\boldsymbol{\theta}_2$,
\begin{equation*}
    \| \nabla \mathcal{L}(\boldsymbol{\theta}_1) - \nabla \mathcal{L}(\boldsymbol{\theta}_2) \|
    \leq \beta \| \boldsymbol{\theta}_1 - \boldsymbol{\theta}_2 \|.
\end{equation*}
\end{restatable}

\begin{restatable}[Unbiased Stochastic Gradients, Bounded Variance, and Heterogeneity]{assumption}{restateBoundedVarianceAndHeterogeneity}
\label{ass:stochastic_gradients}
Client $m$ draws an independent data sample $\xi_m \sim \mathcal{D}_m$ and forms a stochastic gradient
$\boldsymbol{g}_m := \nabla_{\boldsymbol{\theta}} \ell(\boldsymbol{\theta}; \xi_m)$.
We assume
\begin{enumerate}
    \item \textbf{Unbiasedness:} $\E_{\xi_m}[\boldsymbol{g}_m] = \nabla \mathcal{L}_m(\boldsymbol{\theta})$.
    \item \textbf{Bounded Variance:} $\E_{\xi_m}\!\left[\|\boldsymbol{g}_m - \nabla \mathcal{L}_m(\boldsymbol{\theta}) \|^2\right] \leq \sigma^2$.
    \item \textbf{Bounded Gradient Dissimilarity:} $\|
    \nabla \mathcal{L}_m(\boldsymbol{\theta}) - \nabla \mathcal{L}(\boldsymbol{\theta}) \|^2 \leq \kappa^2$.
\end{enumerate}
\end{restatable}

\subsection{Properties of the Hybrid Estimator}
A core challenge in analyzing HO-SFL is the bias and variance introduced by the zeroth-order gradient estimator on the client side.
Based on the detailed derivations in \textbf{Lemma~\ref{lemma:ZO_bound}} and \textbf{Lemma~\ref{lemma:ZO_local_bias}}, we characterize the behavior of the client-side estimator $\hat{\boldsymbol{g}}_c^t$.

\begin{proposition}[Bias and Variance Decomposition]
\label{prop:zo_properties}
Let $\Gamma$ be the regularity bound on the gradient magnitudes and Hessian spectral norms. The client-side zeroth-order estimator $\hat{\boldsymbol{g}}_{c}^t$ exhibits the following properties:

\textbf{1. Bias Control (from Lemma~\ref{lemma:global_bias}):} The estimator is biased due to the curvature of the loss landscape. The expected deviation is bounded by the smoothing parameter $\mu$ and the client model dimension $d_c$:
\begin{equation}
    \| \E_t[\hat{\boldsymbol{g}}_c^t] - \nabla_{\thetac} \mathcal{L}(\boldsymbol{\theta}^t) \|^2 \leq \frac{\mu^2 \Gamma^4}{4} (d_c + 3)^3.
\end{equation}

\textbf{2. Second Moment Bound (from Lemma~\ref{lemma:client_moment_bound}):} The second moment of the aggregated client estimator $\hat{\boldsymbol{g}}_c^t$ is bounded by the true gradient norm scaled by an expansion factor $C_1$, plus a composite variance term:
\begin{equation}
    \E_t \| \hat{\boldsymbol{g}}_c^t \|^2 \leq C_1 \| \nabla_{\thetac} \mathcal{L}(\boldsymbol{\theta}^t) \|^2 + \Omega_c,
\end{equation}
where $\Omega_c := C_1 (\sigma^2 + \kappa^2) + \sigma_{ZO}^2$ collects the combined client-side variance, including stochastic gradient noise ($\sigma^2$), client heterogeneity ($\kappa^2$), and zeroth-order estimation error ($\sigma_{ZO}^2$).
\end{proposition}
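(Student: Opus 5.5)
The plan is to view the client update through the per-client scalar function $h_m(\thetac) := \boldsymbol{\lambda}_m^{t\top} f_c(\boldsymbol{x}_m;\thetac)$, with $\boldsymbol{\lambda}_m^t$ held fixed. By construction $v_{m,p}^t = h_m(\thetac^t+\mu\boldsymbol{u}_p^t)-h_m(\thetac^t)$, so $\hat{\boldsymbol{g}}_c^t$ is exactly the averaged one-sided Gaussian-smoothing estimator of $\frac1K\sum_{m} h_m$. The chain rule together with the stationarity relation (\ref{eq:lambda_stationary}) gives
\begin{equation*}
\nabla h_m(\thetac^t)=\big(\partial_{\thetac} f_c\big)^\top \boldsymbol{\lambda}_m^t=\nabla_{\thetac}\ell(\boldsymbol{\theta}^t;\xi_m).
\end{equation*}
The hybrid estimator is therefore a ZO estimator of the true stochastic client gradient $\boldsymbol{g}_{m,c}$. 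The only deviation from the first-order case is that $h_m$ is nonlinear in $\thetac$, because $f_c$ is nonlinear. I would take the regularity bound $\Gamma$ to control both $\|\boldsymbol{\lambda}_m^t\|$ and the Hessian norms of the coordinates of $f_c$, so that $h_m$ is $\Gamma^2$-smooth.

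\textbf{Bias.} I would condition on the samples $\xi_m$ and the set $\mathcal{S}_t$, and take the expectation over $\boldsymbol{u}_p$ first. Using $\E[\boldsymbol{u}\boldsymbol{u}^\top]=\boldsymbol{I}$ and a second-order Taylor expansion, the bound of Lemma~\ref{lemma:ZO_local_bias} (the Nesterov--Spokoiny bound) gives
\begin{equation*}
\|\E_{\boldsymbol{u}}[\tfrac{1}{\mu}(h_m(\thetac+\mu\boldsymbol{u})-h_m(\thetac))\boldsymbol{u}]-\nabla h_m\| \le \tfrac{\mu}{2}\Gamma^2(d_c+3)^{3/2}.
\end{equation*}
Next I average over $m$, take the expectation over $\xi_m$, and use unbiasedness from Assumption~\ref{ass:stochastic_gradients}. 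Uniform client sampling then turns $\E\frac1K\sum_m\nabla_{\thetac}\ell$ into $\nabla_{\thetac}\mathcal{L}$. Because the bias bound is uniform in $m$, Jensen's inequality carries it through the averaging. Squaring gives $\frac{\mu^2\Gamma^4}{4}(d_c+3)^3$, which is part 1 and what I expect Lemma~\ref{lemma:global_bias} to package.

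\textbf{Second moment.} Here I would split $v_{m,p}^t/\mu = \boldsymbol{u}_p^\top\nabla h_m + r_{m,p}$, where the remainder satisfies $|r_{m,p}|\le \frac{\mu}{2}\Gamma^2\|\boldsymbol{u}_p\|^2$. This writes
\begin{equation*}
\hat{\boldsymbol{g}}_c^t=\frac1P\sum_p \boldsymbol{u}_p\boldsymbol{u}_p^\top \bar{\boldsymbol{g}}_c + \frac1P\sum_p \bar r_p\boldsymbol{u}_p,
\end{equation*}
with $\bar{\boldsymbol{g}}_c=\frac1K\sum_m \boldsymbol{g}_{m,c}$. For the linear part, the Gaussian moment identity $\E\|\boldsymbol{u}\boldsymbol{u}^\top\boldsymbol{a}\|^2=(d_c+2)\|\boldsymbol{a}\|^2$ and independence across $p$ give
\begin{equation*}
\E\|\cdot\|^2=\big(1+\tfrac{d_c+1}{P}\big)\|\bar{\boldsymbol{g}}_c\|^2.
\end{equation*}
I would then bound $\E\|\bar{\boldsymbol{g}}_c\|^2 \le 3(\|\nabla_{\thetac}\mathcal{L}\|^2+\sigma^2+\kappa^2)$ by splitting into the true gradient, the sampling noise, and the heterogeneity from client selection. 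The cross-term with the remainder is handled by $\|a+b\|^2\le2\|a\|^2+2\|b\|^2$. Using $\E\|\boldsymbol{u}\|^6\le(d_c+6)^3$, the remainder contributes $\sigma_{ZO}^2=\mathcal{O}(\mu^2\Gamma^4(d_c+6)^3)$. Collecting constants gives $C_1=6(1+\frac{d_c+1}{P})$, which yields part 2 with $\Omega_c=C_1(\sigma^2+\kappa^2)+\sigma_{ZO}^2$, in the form of Lemma~\ref{lemma:client_moment_bound}.

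\textbf{Main obstacle.} The delicate step is the conditioning order. The vector $\boldsymbol{\lambda}_m^t$ depends on $\xi_m$ and on $\thetas^t$ but not on $\boldsymbol{u}_p$, so the Gaussian expectation must be taken first with $\boldsymbol{\lambda}$ frozen, and the data and sampling expectations afterward. A second point is that the shared $\boldsymbol{u}_p$ across clients makes the per-client estimators correlated. I would handle this by working with the averaged function $\frac1K\sum_m h_m$ directly rather than bounding each client separately. The smoothness constant of that average still needs to be justified through $\Gamma$: it requires bounding the product $\|\boldsymbol{\lambda}\|\cdot\|\nabla^2 f_c\|$, which is where the regularity assumption encoded in $\Gamma$ is indispensable.
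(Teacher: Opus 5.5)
For the bias, your argument matches the paper's. It is the per-client Nesterov--Spokoiny bound of Lemma~\ref{lemma:ZO_local_bias} (Taylor remainder plus $\E\|\boldsymbol{u}\|^3\le(d_c+3)^{3/2}$), followed by unbiasedness, uniform sampling and Jensen as in Lemma~\ref{lemma:global_bias}.

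For the second moment you take a genuinely different route. The paper never analyzes the aggregate. It applies Jensen across the sampled clients, $\|\frac1K\sum_m\hat{\boldsymbol{g}}_{c,m}^t\|^2\le\frac1K\sum_m\|\hat{\boldsymbol{g}}_{c,m}^t\|^2$, and invokes the per-client bound $C_1\|\boldsymbol{g}_{c,m}^t\|^2+\sigma_{ZO}^2$ of Lemma~\ref{lemma:ZO_bound}. It then averages $\|\nabla_{\thetac}\mathcal{L}_m\|^2+\sigma^2$ over uniform sampling using the identity $\frac1M\sum_m\|\nabla_{\thetac}\mathcal{L}_m\|^2=\|\nabla_{\thetac}\mathcal{L}\|^2+\frac1M\sum_m\|\nabla_{\thetac}\mathcal{L}_m-\nabla_{\thetac}\mathcal{L}\|^2$. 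This sidesteps the cross-client correlation created by shared seeds, at the price of discarding any averaging over $K$.

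You instead use the shared seeds to write $\hat{\boldsymbol{g}}_c^t$ as one Gaussian projection of $\bar{\boldsymbol{g}}_c$ plus an averaged remainder. That is valid and potentially sharper, but as written it does not give the paper's constants:
\begin{itemize}
\item You get $C_1=6(1+\frac{d_c+1}{P})$, three times the paper's $C_1=2(1+\frac{d_c+1}{P})$. The paper's value also fixes the step size $\eta\le\frac{1}{2\beta C_1}$ in Theorem~\ref{thm:convergence_main}.
\item You get $(d_c+6)^3$ instead of $d_c(d_c+2)(d_c+4)$ in $\sigma_{ZO}^2$.
\end{itemize}
Neither changes the $\mathcal{O}(\sqrt{d_c/(PT)})$ rate, but the result is not the paper's bound.

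The factor $3$ comes from your three-way split of $\E\|\bar{\boldsymbol{g}}_c\|^2$. Since $\E_t[\bar{\boldsymbol{g}}_c]=\nabla_{\thetac}\mathcal{L}(\boldsymbol{\theta}^t)$, the bias--variance argument of Lemma~\ref{lemma:server_moment_bound} gives $\E_t\|\bar{\boldsymbol{g}}_c\|^2\le\|\nabla_{\thetac}\mathcal{L}(\boldsymbol{\theta}^t)\|^2+\frac{\sigma^2+\kappa^2}{K}$. With that bound and the exact sixth moment, your route recovers the paper's $C_1$ and $\sigma_{ZO}^2$ exactly. It also improves the noise part of $\Omega_c$ to $C_1\frac{\sigma^2+\kappa^2}{K}$, which the paper's Jensen step cannot capture.

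Your reading of $\Gamma$ as a deterministic smoothness constant for the scalar function $h_m$ is cleaner than Definition~\ref{def:gamma}. There, the Taylor points depend on $\mathcal{U}^t$, so $\Gamma$ cannot simply be pulled out of $\E_{\mathcal{U}^t}$. The single-point Lagrange remainder is also not valid for vector-valued $f_c$. State the Hessian bound in tensor operator norm, $\|\nabla^2_{\thetac}f_c[\boldsymbol{u},\boldsymbol{u}]\|\le\Gamma\|\boldsymbol{u}\|^2$. Coordinate-wise spectral bounds would only make $h_m$ $\|\boldsymbol{\lambda}_m^t\|_1\Gamma$-smooth, costing up to a factor $\sqrt{D}$.
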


\subsection{Main Convergence Result}
Building on the properties of the hybrid estimator, we now establish the convergence guarantees of HO-SFL under the non-convex setting.

\begin{restatable}[Convergence Bound of HO-SFL]{theorem}{mainConvergenceThm}
\label{thm:convergence_main}
Suppose Assumptions~\ref{ass:smoothness} and \ref{ass:stochastic_gradients} hold. Let the learning rate satisfy $\eta \leq \frac{1}{2\beta C_1}$. For any $T \geq 1$, the average squared gradient norm of HO-SFL is bounded by:
\begin{equation}
\label{eq:thm_bound}
\begin{aligned}
\frac{1}{T} \sum_{t=0}^{T-1} \mathbb{E} \bigl\| \nabla \mathcal{L}(\boldsymbol{\theta}^t) \bigr\|^2
\leq & \frac{4\Delta_{\mathcal{L}}}{\eta T} + 2 \eta \beta \left(\Omega_c + \Omega_s\right) \\
& + \frac{\mu^2 \Gamma^4}{2} (d_c + 3)^3,
\end{aligned}
\end{equation}
where $\Delta_{\mathcal{L}} = \mathcal{L}(\boldsymbol{\theta}^0) - \mathcal{L}^*$, $\Omega_s = \frac{\sigma^2+\kappa^2}{K}$ represents the server-side variance, and $\Omega_c = C_1(\sigma^2+\kappa^2) + \sigma_{ZO}^2$ denotes the aggregated client-side variance.
\end{restatable}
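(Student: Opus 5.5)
The plan is the standard descent-lemma argument for nonconvex SGD, applied to the concatenated update $\boldsymbol{\theta}^{t+1}=\boldsymbol{\theta}^t-\eta[\hat{\boldsymbol{g}}_c^t;\boldsymbol{g}_s^t]$, where $\boldsymbol{g}_s^t=\frac1K\sum_{m\in\mathcal{S}_t}\boldsymbol{g}_{s,m}^t$. By Assumption~\ref{ass:smoothness},
\begin{equation*}
\mathcal{L}(\boldsymbol{\theta}^{t+1})\le \mathcal{L}(\boldsymbol{\theta}^t)-\eta\langle\nabla\mathcal{L}(\boldsymbol{\theta}^t),[\hat{\boldsymbol{g}}_c^t;\boldsymbol{g}_s^t]\rangle+\frac{\eta^2\beta}{2}\bigl(\|\hat{\boldsymbol{g}}_c^t\|^2+\|\boldsymbol{g}_s^t\|^2\bigr).
\end{equation*}
I will take the conditional expectation $\E_t$ over client sampling, data samples and perturbations, and treat the client block and the server block separately. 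The two blocks decouple in the quadratic term, so each needs only its own moment bound.

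\textbf{Server block.} Under uniform sampling and Assumption~\ref{ass:stochastic_gradients}, $\E_t[\boldsymbol{g}_s^t]=\nabla_{\thetas}\mathcal{L}(\boldsymbol{\theta}^t)$. The variance of an average of $K$ independent terms, each with variance at most $\sigma^2+\kappa^2$ (sampling noise plus dissimilarity), gives
\begin{equation*}
\E_t\|\boldsymbol{g}_s^t\|^2\le\|\nabla_{\thetas}\mathcal{L}\|^2+\Omega_s.
\end{equation*}
Here I would be loose and absorb any cross terms into the stated constant, possibly by bounding with a factor $C_1\ge1$ on the gradient term.

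\textbf{Client block.} Write $\E_t[\hat{\boldsymbol{g}}_c^t]=\nabla_{\thetac}\mathcal{L}+\boldsymbol{b}^t$, with $\|\boldsymbol{b}^t\|^2\le\frac{\mu^2\Gamma^4}{4}(d_c+3)^3$ by Proposition~\ref{prop:zo_properties}(1). Young's inequality gives
\begin{equation*}
-\langle\nabla_{\thetac}\mathcal{L},\nabla_{\thetac}\mathcal{L}+\boldsymbol{b}^t\rangle\le-\tfrac12\|\nabla_{\thetac}\mathcal{L}\|^2+\tfrac12\|\boldsymbol{b}^t\|^2.
\end{equation*}
The second moment is controlled by Proposition~\ref{prop:zo_properties}(2), $\E_t\|\hat{\boldsymbol{g}}_c^t\|^2\le C_1\|\nabla_{\thetac}\mathcal{L}\|^2+\Omega_c$.

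\textbf{Combining.} I will weaken the server inner-product term to $-\frac12\|\nabla_{\thetas}\mathcal{L}\|^2$ so both blocks match. This yields
\begin{equation*}
\E_t\mathcal{L}(\boldsymbol{\theta}^{t+1})\le\mathcal{L}(\boldsymbol{\theta}^t)-\eta\bigl(\tfrac12-\tfrac{\eta\beta C_1}{2}\bigr)\|\nabla\mathcal{L}\|^2+\frac{\eta^2\beta}{2}(\Omega_c+\Omega_s)+\frac{\eta}{2}\|\boldsymbol{b}^t\|^2.
\end{equation*}
With $\eta\le\frac1{2\beta C_1}$ the coefficient is at least $\frac14$. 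I then take total expectation, telescope over $t=0,\dots,T-1$, use $\mathcal{L}\ge\mathcal{L}^*$, and multiply by $\frac{4}{\eta T}$. This gives $\frac{4\Delta_{\mathcal{L}}}{\eta T}+2\eta\beta(\Omega_c+\Omega_s)+2\cdot\frac{\mu^2\Gamma^4}{4}(d_c+3)^3$, which is exactly the bound \eqref{eq:thm_bound}.

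\textbf{Main obstacle.} The delicate step is the client block. The estimator uses a single anchor $\boldsymbol{\lambda}_m^t$ and the same perturbations shared across clients, so its bias and second moment must be measured against the true global $\nabla_{\thetac}\mathcal{L}$ rather than a per-client surrogate. I rely entirely on Proposition~\ref{prop:zo_properties}, which already packages Lemmas~\ref{lemma:global_bias} and \ref{lemma:client_moment_bound}, to handle the chain rule $\nabla_{\thetac}\ell=J_{f_c}^\top\boldsymbol{\lambda}$ and the Gaussian-smoothing bias. The only care needed is to keep the constants consistent with the stated bound, namely the factor $\frac12$ from Young's inequality and the factor $4$ from the coefficient $\frac14$.
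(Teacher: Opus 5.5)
Your proposal is correct and follows the paper's proof essentially step for step: the descent lemma on the concatenated update, exact unbiasedness on the server block, Young's inequality against the ZO bias on the client block, the two second-moment lemmas, the step-size condition $\eta\le\frac{1}{2\beta C_1}$ to get coefficient $\frac14$, and then telescoping. The only cosmetic difference is that you weaken the server linear term to $-\frac{\eta}{2}$ up front and use $C_1\ge 1$ to absorb the server quadratic coefficient, whereas the paper keeps $\eta-\frac{\beta\eta^2}{2}\ge\frac{\eta}{2}$ and then relaxes it to $\frac{\eta}{4}$.
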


Theorem~\ref{thm:convergence_main} provides a finite-time error bound on the gradient norm. The bound consists of the optimization error, the variance term and the bias term induced by the ZO approximation. To explicitly quantify the convergence rate with respect to the dimension $d_c$ and the perturbation number $P$, we present the following corollary.

\begin{corollary}[Convergence Rate of HO-SFL]
\label{corr:rate}
By choosing the learning rate $\eta = \Theta\left(\sqrt{\frac{P}{T d_c}}\right)$ and the smoothing parameter $\mu = \mathcal{O}\left((PT)^{-1/4} d_c^{-5/4}\right)$, the convergence rate of HO-SFL satisfies:
\begin{equation}
    \min_{0 \leq t < T} \E \|\nabla \mathcal{L}(\boldsymbol{\theta}^t)\|^2 = \mathcal{O}\left( \sqrt{\frac{d_c}{P T}} \right).
\end{equation}
\end{corollary}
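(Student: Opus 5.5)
The corollary follows by substituting the stated parameter choices into the bound of Theorem~\ref{thm:convergence_main} and balancing terms. Since the minimum over $t$ is at most the average, it suffices to bound the right-hand side of \eqref{eq:thm_bound}. This requires making explicit the $d_c$ and $P$ dependence hidden in $C_1$ and $\sigma_{ZO}^2$.

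\textbf{Step 1: dimension dependence of the constants.} From the standard Gaussian-smoothing second-moment bound behind Lemma~\ref{lemma:client_moment_bound}, I will take $C_1 = \mathcal{O}(1 + (d_c+2)/P) = \mathcal{O}(d_c/P)$ in the regime $P \le d_c$. Averaging $P$ i.i.d.\ directions gives $\E\|\frac1P\sum_p (\boldsymbol{u}_p^\top \boldsymbol{g})\boldsymbol{u}_p\|^2 = (1+\frac{d_c+1}{P})\|\boldsymbol{g}\|^2$. I will take $\sigma_{ZO}^2 = \mathcal{O}(\mu^2 \Gamma^4 d_c^3/P)$, i.e.\ at most of order $\mu^2\Gamma^4(d_c+6)^3$, from the higher-order Taylor remainder. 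Then
\[
\Omega_c+\Omega_s = \mathcal{O}\!\left(\tfrac{d_c}{P}(\sigma^2+\kappa^2) + \mu^2 d_c^3\right).
\]

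\textbf{Step 2: substitution.} With $\eta = c\sqrt{P/(Td_c)}$, the optimization term is
\[
\frac{4\Delta_{\mathcal{L}}}{\eta T} = \mathcal{O}\!\left(\sqrt{\tfrac{d_c}{PT}}\right).
\]
The variance term is
\[
2\eta\beta\,\tfrac{d_c}{P}(\sigma^2+\kappa^2) = \mathcal{O}\!\left(\sqrt{\tfrac{P}{Td_c}}\cdot\tfrac{d_c}{P}\right) = \mathcal{O}\!\left(\sqrt{\tfrac{d_c}{PT}}\right).
\]
The server part $\eta\beta\Omega_s$ is smaller still. For the bias, $\mu^2 = \mathcal{O}((PT)^{-1/2} d_c^{-5/2})$ gives
\[
\mu^2\Gamma^4(d_c+3)^3 = \mathcal{O}(d_c^{1/2}(PT)^{-1/2}) = \mathcal{O}\!\left(\sqrt{\tfrac{d_c}{PT}}\right).
\]
The $\mu^2 d_c^3$ part of $\sigma_{ZO}^2$ is multiplied by $\eta$ and is therefore of lower order.

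\textbf{Step 3: step-size condition.} I must check that $\eta \le 1/(2\beta C_1) = \Theta(P/d_c)$. This holds once $\sqrt{P/(Td_c)} \lesssim P/d_c$, i.e.\ $T \gtrsim d_c/P$. I will state this as a requirement on $T$, or handle the short horizon by adding a $\beta C_1\Delta_{\mathcal{L}}/T$ term via $\eta=\min\{\cdot,\cdot\}$, which is lower order.

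\textbf{Main obstacle.} The hardest step is pinning down $C_1=\Theta(d_c/P)$ rather than $\Theta(d_c)$. This is what makes $P$ appear in the rate. It requires that the aggregated scalars $\bar v_p^t$ share the same directions $\boldsymbol{u}_p^t$, so that the $P$-average reduces variance. I would verify this by computing the fourth Gaussian moment directly for the linearized estimator, conditioned on the server feedback $\boldsymbol{\lambda}_m^t$.
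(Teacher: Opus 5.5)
Your proposal is correct and follows the paper's argument: bound the minimum by the average in Theorem~\ref{thm:convergence_main}, use $C_1 = 2\bigl(1+\frac{d_c+1}{P}\bigr) = \mathcal{O}(d_c/P)$ to balance the $\frac{1}{\eta T}$ and $\eta\,\Omega_c$ terms at $\mathcal{O}(\sqrt{d_c/(PT)})$, and check that the chosen $\mu$ keeps the bias term $\mu^2\Gamma^4(d_c+3)^3$ (and the $\eta\sigma_{ZO}^2 = \mathcal{O}(1/T)$ contribution) within that order. Your ``main obstacle'' is not actually open, because Lemma~\ref{lemma:ZO_bound} already supplies this $C_1$ via the fourth Gaussian moment, and your explicit checks that $\eta \le 1/(2\beta C_1)$ forces $T \gtrsim d_c/P$ and that $P \lesssim d_c$ is needed for $\eta\,\Omega_s$ and the constant part of $C_1$ to be dominated are details the paper's short proof leaves implicit.
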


\begin{proof}
Substituting the selected $\eta$ into the bound in Eq.~(\ref{eq:thm_bound}), the optimization error term $\mathcal{O}(\frac{1}{\eta T})$ and the variance term $\mathcal{O}(\eta \frac{d_c}{P})$ are balanced to scale as $\mathcal{O}(\sqrt{\frac{d_c}{PT}})$.
Simultaneously, the chosen scaling for $\mu$ ensures that the bias term $\mathcal{O}(\mu^2 d_c^3)$ does not exceed the order of the dominant term. Thus, the total convergence rate is dominated by $\mathcal{O}(\sqrt{\frac{d_c}{PT}})$.
\end{proof}

\begin{remark}
Standard ZO optimization methods typically suffer from a linear dependence on the problem dimension, yielding a rate of $\mathcal{O}(\sqrt{d/T})$ (where $d$ is the full model dimension). In contrast, Corollary~\ref{corr:rate} highlights the critical advantage of \textbf{dimensionality decoupling} in HO-SFL. The dimension factor in the convergence rate depends only on the client-side dimension $d_c$, where $d_c \ll d$ in split learning settings. This effectively isolates the ZO optimization difficulty to a much smaller subspace, significantly mitigating the variance of the gradient estimator, which would otherwise be exacerbated by the massive server-side parameters.
\end{remark}

\section{Experiments}
\label{sec:experiments}
In this section, we evaluate the performance of HO-SFL against state-of-the-art SFL baselines on vision and language tasks to demonstrate its fast convergence speed, communication efficiency, and substantial reductions in client memory usage.

\subsection{Experimental Setup}

\textbf{Datasets and Models.}
For vision tasks, we utilize CIFAR-10 and CIFAR-100 datasets~\cite{krizhevsky2009learning}. We evaluate under both IID and Non-IID settings, where Non-IID partitions are generated using a Dirichlet distribution with $\alpha=1$~\cite{hsu2019measuring}. The model is a pre-trained ResNet-18~\cite{he2016deep} and split between the 2nd and 3rd residual blocks. 
For language tasks, we employ five pre-trained causal language models of varying scales: OPT-125M~\cite{zhang2022opt}, Gemma-3-270M~\cite{team2025gemma}, LLaMA-3.2-1B, LLaMA-3.2-3B~\cite{grattafiori2024llama}, and Qwen3-8B~\cite{yang2025qwen3}.
These models are evaluated on a subset of the GLUE benchmark~\cite{wang2018glue} (SST-2, RTE, WSC) and SQuAD~\cite{rajpurkar2016squad} using LoRA~\cite{hu2022lora}. 
We use a larger-scale partial-participation setup for vision tasks (100 total clients, 10 sampled per round) and a smaller-scale setup for LLM tasks (10 total clients, 3 sampled per round).

\textbf{Baselines and Fairness Criterion.}
We benchmark HO-SFL against multiple methods categorized by their optimization strategy. 
The first-order baselines include \textbf{SFL}~\cite{thapa2022splitfed}, \textbf{SplitLoRA}~\cite{lin2024splitlora}, and \textbf{FSL-SAGE}~\cite{nair2025fslsage}, while the zeroth-order baselines comprise \textbf{ZO-SFL} and \textbf{MU-SplitFed}~\cite{liang2026towards}. 
Unless otherwise specified, we set the number of perturbations $P$ in HO-SFL to 5 for vision tasks and 2 for language tasks, and uniformly set $\mu$ to $1\times10^{-3}$.
Crucially, since HO-SFL and MU-SplitFed perform aggregation at every step, whereas other methods synchronize after multiple local updates, comparisons based on communication rounds are inherently misleading.
To ensure a rigorous evaluation, we adopt \textit{proceeded samples} as the unified metric, terminating training when the aggregate number of processed samples reaches 160k for vision tasks and 80k for language tasks.
Detailed hyperparameter configurations, specific baseline introduction, and a comprehensive discussion on the fairness of the metric are provided in Appendix~\ref{app:exp_details}.

\begin{figure}[t]
    \centering
    \includegraphics[width=\linewidth]{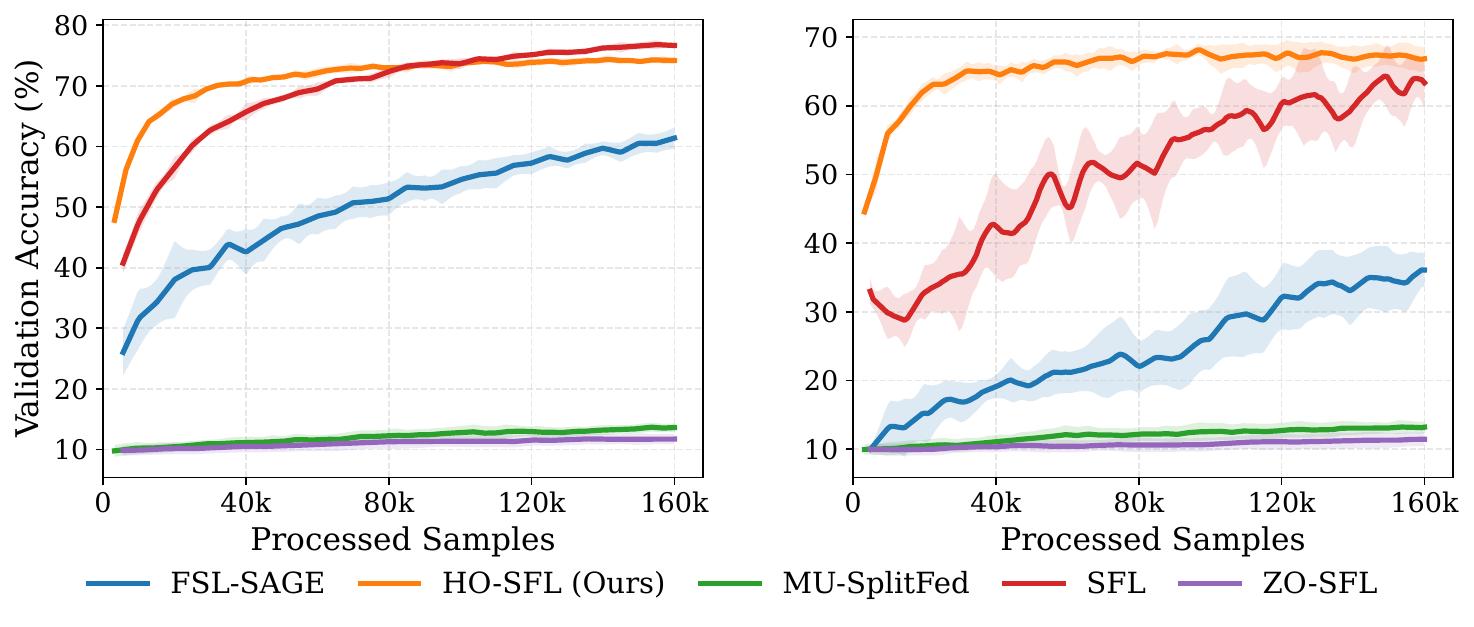} 
    \caption{Validation accuracy convergence on CIFAR-10 under IID (left) and Non-IID (right) settings. Solid lines denote the mean performance, and shaded regions represent the standard deviation across 10 independent random seeds.}
    \label{fig:cifar10_convergence}
\end{figure}

\subsection{Performance Analysis}
\textbf{Vision Tasks.}
Figure~\ref{fig:cifar10_convergence} visualizes the validation accuracy against processed samples on CIFAR-10. In the \textbf{IID setting}, HO-SFL exhibits only a marginal performance gap compared to SFL, demonstrating that HO-SFL effectively mitigates the convergence slowdown associated with ZO estimation.
However, in \textbf{Non-IID setting}, HO-SFL demonstrates superior performance, even outperforming the first-order baseline (i.e., SFL). Unlike other methods that suffer from severe client drift due to infrequent aggregation of high-dimensional models, HO-SFL executes dimension-free aggregation at every step, effectively mimicking centralized training. Furthermore, pure zeroth-order approaches like MU-SplitFed and ZO-SFL struggle to converge within the limited sample budget, highlighting the efficiency of our proposed method. Due to space constraints, the CIFAR-100 convergence curves and the test accuracies with standard deviations are provided in Appendix~\ref{app:vision_more_results}

\textbf{Language Tasks.}
Table~\ref{tab:llm_results} presents the fine-tuning results on GLUE tasks, demonstrating that HO-SFL maintains highly competitive performance against SplitLoRA (the first-order baseline) despite relying on zeroth-order client-side updates. Across varying model architectures, our method exhibits remarkable robustness; for instance, HO-SFL achieves a comparable 93.2\% accuracy on SST2 and notably surpasses SplitLoRA on the RTE task when fine-tuning the LLaMA-3.2-1B. In contrast, the standard ZO-SFL baseline fails to converge effectively on most tasks.
To further assess the scalability of HO-SFL and its performance on generative tasks, we extend our evaluation to the SQuAD dataset. As shown in Table~\ref{tab:squad_results}, we scale the model size by a factor of $64\times$, ranging from the 125M lightweight model to the 8B foundation model. Across this scaling range, HO-SFL exhibits no convergence degradation and matches the F1 score of the SplitLoRA. These results demonstrate that HO-SFL scales robustly to large LLMs and validate the effectiveness of our hybrid formulation in stabilizing convergence.

\begin{table}[th]
\centering
\caption{LLM Fine-tuning Accuracy (\%) on GLUE tasks. Values in boldface indicate the highest accuracy.}
\label{tab:llm_results}
\renewcommand{\arraystretch}{1.15}
\setlength{\tabcolsep}{5.0pt}
\begin{small}
\begin{tabular}{ccccc}
\toprule
Model & Task & SplitLoRA & ZO-SFL & \textbf{HO-SFL} \\
\midrule
\multirow{3}{*}{\shortstack{OPT\\(125M)}}
& SST2 & 87.5 & 52.8 & \textbf{87.6} \\
& WSC & \textbf{64.4} & 36.5 & 60.6 \\
& RTE & 57.8 & 52.0 & \textbf{59.2} \\
\midrule
\multirow{3}{*}{\shortstack{Gemma-3\\(270M)}}
& SST2 & 90.3 & 51.8 & \textbf{90.8} \\
& WSC & 61.5 & 36.5 & \textbf{62.5} \\
& RTE & 59.6 & 54.2 & \textbf{65.0} \\
\midrule
\multirow{3}{*}{\shortstack{LLaMA-3.2\\(1B)}}
& SST2 & \textbf{94.4} & 61.5 & 93.9 \\
& WSC & 61.5 & 36.5 & \textbf{61.5} \\
& RTE & 70.0 & 49.1 & \textbf{73.3} \\
\bottomrule
\end{tabular}
\end{small}
\end{table}

\begin{table}[th]
\centering
\caption{LLM Fine-tuning F1 Scores on the SQuAD task.}
\label{tab:squad_results}
\renewcommand{\arraystretch}{1.15}
\setlength{\tabcolsep}{12.0pt}
\begin{small}
\begin{tabular}{lcc}
\toprule
Model & SplitLoRA & \textbf{HO-SFL} \\
\midrule
OPT (125M) & 0.5985 & 0.5744 \\
LLaMA-3.2 (1B) & 0.8804 & 0.8687 \\
LLaMA-3.2 (3B) & 0.9271 & 0.9238 \\
Qwen3 (8B) & 0.9413 & 0.9389 \\
\bottomrule
\end{tabular}
\end{small}
\end{table}

\subsection{Communication and Memory Efficiency}

\textbf{Communication Analysis.}
We analyze the communication cost breakdown in Figure~\ref{fig:traffic_breakdown} recorded in the CIFAR-10 IID experiment. Overall, HO-SFL achieves the lowest communication cost among the compared methods (2.44~GB), substantially lower than SFL (4.20~GB) and ZO-SFL (4.20~GB), and also lower than FSL-SAGE (3.71~GB). The savings mainly come from avoiding high-dimensional model transmission during aggregation, whereas MU-SplitFed incurs a massive overhead due to transmitting full model updates at every step. For readability, Figure~\ref{fig:traffic_breakdown} omits the scalar/seed components (e.g., HO-SFL seeds/scalars and the scalar traffic in ZO-SFL and Mu-SplitFed), each contributing less than 1~MB in total; see Appendix~\ref{app:comm_data} for the exact numbers.

\begin{figure}[t]
    \centering
    \begin{subfigure}[b]{0.48\linewidth}
        \centering
        \includegraphics[width=\linewidth]{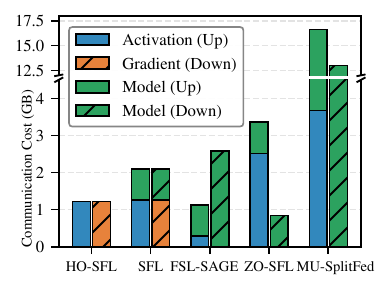}
        \caption{Breakdown of communication traffic.}
        \label{fig:traffic_breakdown}
    \end{subfigure}
    \hfill
    \begin{subfigure}[b]{0.48\linewidth}
        \centering
        \includegraphics[width=\linewidth]{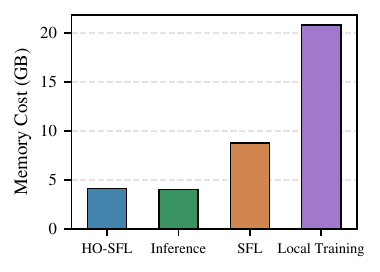}
        \caption{Client-side peak memory consumption}
        \label{fig:memory_cost}
    \end{subfigure}
    \caption{Communication and memory profiling.}
    \vspace{-3mm}
\end{figure}

\textbf{Memory Consumption.} We profiled the peak client-side memory usage for fine-tuning LLaMA-3.2-1B with LoRA on the SST-2 task, as illustrated in Figure~\ref{fig:memory_cost}. In this profiling setup, the client-side split contains 8 transformer layers. While full local training demands a prohibitive 20.81 GB that far exceeds the capacity of typical edge devices, even standard SFL imposes significant pressure by consuming 8.78 GB. In contrast, HO-SFL dramatically reduces this footprint to 4.14 GB. This is strikingly close to inference(~4.01GB), confirming that HO-SFL effectively eliminates the memory bottleneck of BP. Detailed information regarding the memory measurement methodology and device specifications is provided in Appendix~\ref{app:memory_profiling}.

\subsection{Ablation Studies}

Due to space constraints, we present detailed ablation studies in Appendix~\ref{subsec:ablation_depth} and Appendix~\ref{subsec:ablation_P_mu}. 
Experiments on LLaMA-3.2-1B varying the client-side transformer layers from 2 to 8 confirm our theoretical analysis in Corollary~\ref{corr:rate}. While increasing the client-side dimension $d_c$ leads to a marginal reduction in convergence speed due to the dimension-dependent variance of the ZO estimator, HO-SFL exhibits strong robustness, effectively mitigating the severe slowdown typically observed in pure ZO methods.
Results on ResNet-18 confirm that increasing the perturbation count $P$ yields monotonic performance improvements. 
Meanwhile, the smoothing parameter $\mu$ requires a trade-off: values that are too large increase the approximation bias (Proposition~\ref{prop:zo_properties}), while values that are too small suffer from numerical precision issues~\cite{jongeneel2024small}.

\section{Discussions}
\label{sec:discussion}
Beyond the theoretical and empirical results above, we further discuss practical considerations for deploying HO-SFL, including its relation to memory-efficient techniques, the runtime trade-offs regarding hyperparameter selection, and the privacy considerations inherent to our framework.

\paragraph{Alternative Memory-Efficient Techniques.}
Standard memory-efficient techniques can alleviate resource constraints on edge devices, but most of them still fundamentally rely on client-side BP. For instance, activation checkpointing~\cite{chen2016training} reduces activation storage through recomputation, but its memory footprint remains above the inference level and the additional forward passes introduce extra latency. Similarly, shallow partitioning, which assigns fewer layers to the client, reduces the number of client-side parameters but does not remove the need to store activations for BP; thus, the client memory footprint can still grow rapidly with sequence length. In contrast, HO-SFL eliminates client-side BP and maintains an inference-level memory footprint. Importantly, architectural strategies such as LoRA~\cite{hu2022lora} and shallow partitioning are orthogonal to HO-SFL and can be integrated to further improve memory and communication efficiency.

\paragraph{Runtime Trade-offs and Hyperparameter Selection.} 
A key design choice in HO-SFL is selecting the number of perturbations $P$, which determines the trade-off between gradient estimation accuracy and runtime latency. As shown in Figure~\ref{fig:ablation_P} of Appendix~\ref{subsec:ablation_P_mu}, increasing $P$ consistently improves convergence, while the marginal gains gradually diminish. Importantly, due to the latency-hiding mechanism discussed in Subsection~\ref{subsec:efficiency}, the client-side ZO computations incur zero additional latency as long as they finish within the server's BP and communication idle time. Therefore, we recommend a principled selection of $P$: maximizing it up to the threshold where the runtime curve begins to bend upwards, ensuring optimal gradient estimation without sacrificing training speed.

\paragraph{Privacy and Security Considerations.} 
We acknowledge the risk that transmitting intermediate activations from clients to the server could potentially expose local data to feature-space inversion attacks. However, it is crucial to note that transmitting activations is the fundamental mechanism of \textit{all} SFL frameworks; this vulnerability is inherently inherited from the standard SFL architecture rather than introduced by HO-SFL. Because our contribution fundamentally alters the optimization landscape rather than the underlying data flow, HO-SFL remains compatible with standard privacy-enhancing defenses. For instance, techniques such as applying Differential Privacy noise~\cite{wu2023split} to the activations can be seamlessly integrated into HO-SFL to provide rigorous privacy guarantees without hindering our hybrid-order optimization.

\section{Conclusion}
\label{sec:conclusion}

In this paper, we propose Hybrid-Order Split Federated Learning, a novel framework that resolves the tension between memory constraints and convergence speed when fine-tuning large models on edge devices. By reformulating the split learning process through a Lagrangian perspective, HO-SFL decouples optimization into server-side first-order and client-side zeroth-order updates.
Our theoretical analysis shows it mitigates the dimension-dependent convergence slowdown inherent to pure zeroth-order methods.
Empirically, HO-SFL achieves convergence performance comparable to fully first-order baselines, while reducing client memory consumption to inference levels and minimizing communication overhead via dimension-free aggregation.

\section*{Acknowledgements}

This work was supported in part by the Research Grants Council of Hong Kong under Grant 27213824 and CRS HKU702/2.

\section*{Impact Statement}

This paper presents work whose goal is to advance the field of Machine
Learning. There are many potential societal consequences of our work, none of which we feel must be specifically highlighted here

\section*{Code Availability}

Our implementation of Hybrid-Order Split Federated Learning (HO-SFL) is available at
\url{https://github.com/HKU-WILL-Lab/HO-SFL}.

\bibliography{ref}
\bibliographystyle{icml2026}

\newpage
\appendix
\onecolumn

\section{Proofs}
\label{appendix:proofs}
\subsection{Notions}
Let $\mathcal{F}_t$ be the filtration generated by all randomness up to the end of round $t-1$, i.e.,
\[
\mathcal{F}_t := \sigma\!\left(\boldsymbol{\theta}^0,\{(\mathcal{S}_\tau,\boldsymbol{\xi}^\tau,\mathcal{U}^\tau)\}_{\tau=0}^{t-1}\right).
\]
In particular, $\boldsymbol{\theta}^t$ is $\mathcal{F}_t$-measurable.
We use $\mathbb{E}_t[\cdot] := \mathbb{E}[\cdot \mid \mathcal{F}_t]$ to denote conditional expectation given the history. At round $t$, the algorithm involves three mutually independent sources of randomness: the subset of sampled clients $\mathcal{S}_t$, the data samples $\boldsymbol{\xi}^t := \{\xi_m^t\}_{m\in\mathcal{S}_t}$ drawn by the selected clients, and the random perturbation vectors $\mathcal{U}^t := \{\boldsymbol{u}_p^t\}_{p=1}^P$. To maintain notational conciseness, we define any expectation with a subscript of round $t$ variables to implicitly condition on $\mathcal{F}_t$. For instance, $\mathbb{E}_{\boldsymbol{\xi}^t}[\cdot] := \mathbb{E}_{\boldsymbol{\xi}^t}[\cdot \mid \mathcal{F}_t]$ and $\mathbb{E}_{\mathcal{S}_t}[\cdot] := \mathbb{E}_{\mathcal{S}_t}[\cdot \mid \mathcal{F}_t]$.

For each selected client $m\in\mathcal{S}_t$, the forward pass produces smashed data $\boldsymbol{z}_m^t := f_c(x_m^t;\thetac^t)\in\mathbb{R}^D$ and perturbed smashed data
$\tilde{\boldsymbol{z}}_{m,p}^t := f_c(x_m^t;\thetac^t+\mu\boldsymbol{u}_p^t)$.
The server returns the activation gradient (dual variable) $\boldsymbol{\lambda}_m^t := \nabla_{\boldsymbol{z}_m^t}\ell(\boldsymbol{\theta}^t;\xi_m^t)\in\mathbb{R}^D$.
We also denote the local stochastic gradients at round $t$ as
\begin{equation}
    \boldsymbol{g}_{c,m}^t := \nabla_{\thetac}\ell(\boldsymbol{\theta}^t;\xi_m^t),\qquad
    \boldsymbol{g}_{s,m}^t := \nabla_{\thetas}\ell(\boldsymbol{\theta}^t;\xi_m^t),
\end{equation}
so $\boldsymbol{\lambda}_m^t$, $\boldsymbol{g}_{c,m}^t$, and $\boldsymbol{g}_{s,m}^t$ are measurable with respect to $\sigma(\mathcal{F}_t,\mathcal{S}_t,\xi_m^t)$. We define their aggregated versions as
\begin{equation}
    \hat{\boldsymbol{g}}_c^t := \frac{1}{K}\sum_{m\in\mathcal{S}_t}\hat{\boldsymbol{g}}_{c,m}^t,\qquad
    \boldsymbol{g}_s^t := \frac{1}{K}\sum_{m\in\mathcal{S}_t}\boldsymbol{g}_{s,m}^t.
\end{equation}

We denote the aggregated gradients at round $t$ as $\boldsymbol{g}_s^t$ (server-side) and $\hat{\boldsymbol{g}}_c^t$ (client-side), such
that the global update rule is $\boldsymbol{\theta}^{t+1} = \boldsymbol{\theta}^t - \eta [\hat{\boldsymbol{g}}_c^t; \boldsymbol{g}_s^t]$.

For brevity, we denote the full parameter vector by $\boldsymbol{\theta} := [\thetac;\,\thetas]$.
The global objective function is given by:
\begin{equation}
    \mathcal{L}(\boldsymbol{\theta}) := \frac{1}{M}\sum_{m=1}^M \mathcal{L}_m(\boldsymbol{\theta}), 
    \qquad \mathcal{L}_m(\boldsymbol{\theta}) := \E_{\xi_m \sim \mathcal{D}_m}\!\left[\ell(\boldsymbol{\theta}; \xi_m)\right],
\end{equation}
where $\ell(\boldsymbol{\theta};\xi_m)$ is the sample-wise loss for a single data sample $\xi_m = (x_m, y_m)$ drawn from the local distribution $\mathcal{D}_m$.

To facilitate the analysis of the zeroth-order estimator, we formally define the regularity bound $\Gamma$, which encapsulates the upper bounds on the dual variables and the derivatives of the client network.

\begin{definition}[Regularity Bound $\Gamma$]
\label{def:gamma}
Let $\mathcal{H}_{m,p}^t := \nabla_{\thetac}^2 f_c(x_m^t;\thetac^t+\tau_{m,p}^t \mu \boldsymbol{u}_p^t)$ denote the Hessian of the client model evaluated at the intermediate point determined by Taylor's theorem (where $\tau_{m,p}^t \in (0,1)$). We define the regularity bound $\Gamma_t$ at round $t$ as:
\begin{equation}
\Gamma_t
~:=~
\max\Big\{
    \max_{m\in\mathcal{S}_t}\|\boldsymbol{\lambda}_m^t\|,
    ~\max_{m\in\mathcal{S}_t}\|\nabla_{\thetac} f_c(x_m^t;\thetac^t)\|_{\mathrm{op}},
    ~\max_{m\in\mathcal{S}_t, \, p\in\{1,\dots,P\}} \|\mathcal{H}_{m,p}^t\|_{\mathrm{op}}
\Big\},
\end{equation}
where $\|\cdot\|_{\mathrm{op}}$ denotes the operator norm. We further define the global bound $\Gamma := \sup_{0\le t\le T-1}\Gamma_t$.
\end{definition}

\subsection{Assumptions}
We adopt standard assumptions widely used in the analysis of non-convex federated optimization and zeroth-order methods.
\restateSmoothness*

\restateBoundedVarianceAndHeterogeneity*

\subsection{Proof of Lemma \ref{lemma:ZO_bound}: Bound on Local ZO Estimator}
\label{app:proof_ZO_bound1}

\begin{lemma}[Conditional Bound on Local ZO Estimator]
\label{lemma:ZO_bound}
Condition on $(\mathcal{F}_t,\mathcal{S}_t,\xi_m^t)$, so that $\thetac^t$, $\boldsymbol{\lambda}_m^t$, and $\boldsymbol{g}_{c,m}^t$ are fixed.
Let
\begin{equation}
    \hat{\boldsymbol{g}}_{c,m}^{t}=\frac{1}{P \mu} \sum_{p=1}^P {v}_{m,p}^t \boldsymbol{u}_p^t,
    \qquad \boldsymbol{u}_p^t \sim \mathcal{N}(\boldsymbol{0},\boldsymbol{I}_{d_c}) \text{ i.i.d.}
\end{equation}
where
\begin{equation}
    v_{m,p}^t
    := (\boldsymbol{\lambda}_m^t)^\top\!\left(\tilde{\boldsymbol{z}}_{m,p}^t-\boldsymbol{z}_m^t\right),
    \qquad
    \tilde{\boldsymbol{z}}_{m,p}^t := f_c(x_m^t;\thetac^t+\mu \boldsymbol{u}_p^t),~ \boldsymbol{z}_m^t := f_c(x_m^t;\thetac^t).
\end{equation}
Let $\boldsymbol{g}_{c,m}^{t} := \nabla_{\thetac} \ell(\boldsymbol{\theta^{t}}; \xi_m^{t})$.
Using the regularity bound $\Gamma$ from Definition \ref{def:gamma}, we have: 
\begin{equation}
    \E_{\mathcal{U}^t}\!\left[\left\|\hat{\boldsymbol{g}}_{c,m}^{t}\right\|^2\right]
    \le C_1 \left\|\boldsymbol{g}_{c,m}^{t}\right\|^2 + \sigma_{ZO}^2,
\end{equation}
with
\begin{equation}
    C_1 := 2\left(1+\frac{d_c+1}{P}\right),
    \qquad
    \sigma_{ZO}^2
    := \frac{\mu^2}{2}\, d_c(d_c+2)(d_c+4)\,\Gamma^4.
\end{equation}
\end{lemma}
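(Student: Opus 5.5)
Fix $(\mathcal{F}_t,\mathcal{S}_t,\xi_m^t)$ and drop the indices $t,m$. The plan is to Taylor-expand the scalar $v_p$ to second order, separate a linear (first-order) part from a curvature remainder, and bound each with Gaussian moment identities.

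\textbf{Step 1: Taylor expansion and chain rule.} Expanding $f_c(x;\thetac+\mu\boldsymbol{u}_p)$ around $\thetac$ with Lagrange remainder, using the Hessian $\mathcal{H}_p$ at the intermediate point from Definition~\ref{def:gamma}, gives
\begin{equation*}
v_p=\mu\,\boldsymbol{\lambda}^\top J\boldsymbol{u}_p+\tfrac{\mu^2}{2}\,r_p,\qquad J:=\nabla_{\thetac}f_c(x;\thetac),\quad r_p:=\boldsymbol{\lambda}^\top\mathcal{H}_p[\boldsymbol{u}_p,\boldsymbol{u}_p].
\end{equation*}
By the chain rule, $J^\top\boldsymbol{\lambda}=\nabla_{\thetac}\ell(\boldsymbol{\theta};\xi)=\boldsymbol{g}_c$, because $\boldsymbol{\lambda}$ is exactly the activation gradient. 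Hence $v_p/\mu=\boldsymbol{g}_c^\top\boldsymbol{u}_p+\tfrac{\mu}{2}r_p$, and the remainder satisfies $|r_p|\le\Gamma^2\|\boldsymbol{u}_p\|^2$ by Cauchy--Schwarz together with the bounds $\|\boldsymbol{\lambda}\|\le\Gamma$ and $\|\mathcal{H}_p\|_{\mathrm{op}}\le\Gamma$.

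\textbf{Step 2: split the estimator.} Write $\hat{\boldsymbol{g}}_c=A+B$ with
\begin{equation*}
A=\tfrac1P\sum_p\boldsymbol{u}_p\boldsymbol{u}_p^\top\boldsymbol{g}_c,\qquad B=\tfrac{\mu}{2P}\sum_p r_p\boldsymbol{u}_p,
\end{equation*}
and use $\|A+B\|^2\le2\|A\|^2+2\|B\|^2$. This accounts for the factor $2$ in both $C_1$ and $\sigma_{ZO}^2$.

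\textbf{Step 3: the linear part.} The summands $\boldsymbol{u}_p\boldsymbol{u}_p^\top\boldsymbol{g}_c$ are i.i.d. with mean $\boldsymbol{g}_c$. Their second moment is $\E\|\boldsymbol{u}\boldsymbol{u}^\top\boldsymbol{g}\|^2=\E[(\boldsymbol{u}^\top\boldsymbol{g})^2\|\boldsymbol{u}\|^2]=(d_c+2)\|\boldsymbol{g}\|^2$, by the Gaussian fourth-moment (Isserlis) identity. Therefore
\begin{equation*}
\E\|A\|^2=\|\boldsymbol{g}_c\|^2+\tfrac{1}{P}\bigl((d_c+2)-1\bigr)\|\boldsymbol{g}_c\|^2=\Bigl(1+\tfrac{d_c+1}{P}\Bigr)\|\boldsymbol{g}_c\|^2.
\end{equation*}
Multiplying by $2$ gives $C_1\|\boldsymbol{g}_c\|^2$.

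\textbf{Step 4: the remainder.} Jensen's inequality on the average gives
\begin{equation*}
\|B\|^2\le\tfrac{\mu^2}{4P}\sum_p r_p^2\|\boldsymbol{u}_p\|^2\le\tfrac{\mu^2}{4P}\sum_p\Gamma^4\|\boldsymbol{u}_p\|^6.
\end{equation*}
The chi-square moment $\E\|\boldsymbol{u}\|^6=d_c(d_c+2)(d_c+4)$ then yields $2\E\|B\|^2\le\tfrac{\mu^2}{2}d_c(d_c+2)(d_c+4)\Gamma^4=\sigma_{ZO}^2$, which matches the claimed constant exactly.

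\textbf{Main obstacle.} The delicate point is Step 4. The intermediate point $\tau_p\mu\boldsymbol{u}_p$, and hence $\mathcal{H}_p$, depends on $\boldsymbol{u}_p$, so $r_p$ cannot be treated as independent of $\boldsymbol{u}_p$. The argument must rely only on the pathwise bound $\|\mathcal{H}_p\|_{\mathrm{op}}\le\Gamma$ that Definition~\ref{def:gamma} supplies, and on applying Jensen before taking expectations, with no cross-term or independence reasoning. One more subtlety: the Taylor remainder for a vector-valued $f_c$ has no single intermediate point in general. I would resolve this by applying Taylor's theorem to the scalar map $\boldsymbol{\lambda}^\top f_c$, which is legitimate and is precisely the quantity that appears in $v_p$.
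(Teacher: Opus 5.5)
Your proposal is correct and follows the paper's proof step for step. Both arguments use the same second-order Taylor expansion with the chain-rule identity $J^\top\boldsymbol{\lambda}=\boldsymbol{g}_c$, the same split $\hat{\boldsymbol{g}}_c=\boldsymbol{A}+\boldsymbol{B}$ with $\|\boldsymbol{A}+\boldsymbol{B}\|^2\le 2\|\boldsymbol{A}\|^2+2\|\boldsymbol{B}\|^2$, the Gaussian fourth-moment identity giving $\E\|\boldsymbol{A}\|^2=(1+\frac{d_c+1}{P})\|\boldsymbol{g}_c\|^2$, and pathwise Jensen plus $\E\|\boldsymbol{u}\|^6=d_c(d_c+2)(d_c+4)$ for $\boldsymbol{B}$. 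Your one departure is to apply the Lagrange-remainder form to the scalar map $\boldsymbol{\lambda}^\top f_c$ rather than to vector-valued $f_c$ as in the paper's step (a). That is a legitimate tightening, since a single intermediate point need not exist when $D>1$. One caveat you share with the paper: pulling $\Gamma^4$ out of $\E_{\mathcal{U}^t}$ requires reading $\Gamma$ as a deterministic uniform bound, because under Definition~\ref{def:gamma} it depends on $\mathcal{U}^t$ through the intermediate points; your ``pathwise bound suffices'' remark does not remove this.
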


\begin{proof}
By Taylor's theorem with Lagrange remainder, there exists some $\tau_{m,p}^t \in (0,1)$ such that:
\begin{align}
    \Delta \boldsymbol{z}_{m,p}^t
    &= \tilde{\boldsymbol{z}}_{m,p}^t-\boldsymbol{z}_m^t \nonumber \\
    &= f_c(x_m^t;\thetac^t+\mu \boldsymbol{u}_p^t)-f_c(x_m^t;\thetac^t) \nonumber\\
    &\overset{(a)}{=}
      \mu \,\mathcal{J}_{m}^t\,\boldsymbol{u}_p^t
      + \underbrace{\frac{\mu^2}{2}\,\mathcal{H}_{m,p}^t[\boldsymbol{u}_p^t,\boldsymbol{u}_p^t]}_{=:\boldsymbol{R}_{m,p}^t}.
    \label{eq:dz_taylor_lagrange}
\end{align}
Here $\mathcal{J}_{m}^t := \nabla_{\thetac} f_c(x_m^t;\thetac^t)\in\mathbb{R}^{D\times d_c}$ is the Jacobian matrix,
and $\mathcal{H}_{m,p}^t := \nabla_{\thetac}^2 f_c(x_m^t;\thetac^t+\tau_{m,p}^t\mu \boldsymbol{u}_p^t)$ is the Hessian tensor.
$\mathcal{H}_{m,p}^t[\cdot,\cdot]$ denotes its bilinear action on the perturbation vectors.
Using \eqref{eq:dz_taylor_lagrange} in the definition of $v_{m,p}^t$ gives
\begin{align}
    v_{m,p}^t
    &= (\boldsymbol{\lambda}_m^t)^\top \Delta \boldsymbol{z}_{m,p}^t \nonumber\\
    &= \mu\,(\boldsymbol{\lambda}_m^t)^\top\mathcal{J}_{m}^t\boldsymbol{u}_p^t
       + (\boldsymbol{\lambda}_m^t)^\top \boldsymbol{R}_{m,p}^t \nonumber\\
    &\overset{(b)}{=}
       \mu\,(\boldsymbol{u}_p^t)^\top \boldsymbol{g}_{c,m}^t
       + (\boldsymbol{\lambda}_m^t)^\top \boldsymbol{R}_{m,p}^t, \label{eq:v_two_terms}
\end{align}
where $(b)$ follows from the chain rule: 
\begin{equation}
    \boldsymbol{g}_{c,m}^t
    ~=~ \nabla_{\thetac}\ell(\boldsymbol{\theta}^t;\xi_m^t)
    ~=~ (\nabla_{\thetac} f_c(x_m^t;\thetac^t))^\top \nabla_{\boldsymbol{z}_m^t}\ell(\boldsymbol{\theta}^t;\xi_m^t)
    ~=~ (\mathcal{J}_{m}^t)^\top \boldsymbol{\lambda}_m^t.
    \label{eq:gc_chain}
\end{equation}

Substituting \eqref{eq:v_two_terms} into the definition of the gradient estimator $\hat{\boldsymbol{g}}_{c,m}^t$ yields two distinct terms:
\begin{align}
    \hat{\boldsymbol{g}}_{c,m}^{t}
    &= \frac{1}{P\mu}\sum_{p=1}^P v_{m,p}^t \boldsymbol{u}_p^t \nonumber\\
    &= \underbrace{\frac{1}{P}\sum_{p=1}^P \boldsymbol{u}_p^t(\boldsymbol{u}_p^t)^\top \boldsymbol{g}_{c,m}^t}_{=:~\boldsymbol{A}}
     + \underbrace{\frac{1}{P\mu}\sum_{p=1}^P \big((\boldsymbol{\lambda}_m^t)^\top \boldsymbol{R}_{m,p}^t\big)\boldsymbol{u}_p^t}_{=:~\boldsymbol{B}}.
    \label{eq:hatg_AB}
\end{align}
Using the inequality $\|\boldsymbol{A}+\boldsymbol{B}\|^2 \le 2\|\boldsymbol{A}\|^2 + 2\|\boldsymbol{B}\|^2$, we have:
\begin{equation}
    \|\hat{\boldsymbol{g}}_{c,m}^t\|^2
    \le 2\|\boldsymbol{A}\|^2 + 2\|\boldsymbol{B}\|^2.
    \label{eq:ab_split}
\end{equation}

\textbf{Bounding Term $\boldsymbol{A}$:}
Based on the fourth moment of Gaussian vectors, we have:
\begin{equation}
    \E_{\mathcal{U}^t}\|\boldsymbol{A}\|^2
    = \left(1+\frac{d_c+1}{P}\right)\|\boldsymbol{g}_{c,m}^t\|^2.
    \label{eq:A_bound}
\end{equation}

\textbf{Bounding Term $\boldsymbol{B}$:}
From \eqref{eq:dz_taylor_lagrange} and the definition of $\Gamma$, the remainder term is bounded by:
\begin{equation}
    \|\boldsymbol{R}_{m,p}^t\|
    = \frac{\mu^2}{2}\|\mathcal{H}_{m,p}^t[\boldsymbol{u}_p^t,\boldsymbol{u}_p^t]\|
    \le \frac{\mu^2}{2}\,\Gamma\,\|\boldsymbol{u}_p^t\|^2.
\end{equation}
Consequently, the scalar projection is bounded by:
\begin{equation}
    |(\boldsymbol{\lambda}_m^t)^\top \boldsymbol{R}_{m,p}^t|
    \le \|\boldsymbol{\lambda}_m^t\| \|\boldsymbol{R}_{m,p}^t\|
    \le \Gamma \cdot \frac{\mu^2}{2}\Gamma\|\boldsymbol{u}_p^t\|^2
    = \frac{\mu^2}{2}\Gamma^2\|\boldsymbol{u}_p^t\|^2.
    \label{eq:BR_point}
\end{equation}
Using Jensen's inequality on the sum over $p$, we have:
\begin{align}
    \E_{\mathcal{U}^t}\|\boldsymbol{B}\|^2
    &\le \E_{\mathcal{U}^t}\left[\frac{1}{P\mu^2}\sum_{p=1}^P
        \big((\boldsymbol{\lambda}_m^t)^\top \boldsymbol{R}_{m,p}^t\big)^2 \|\boldsymbol{u}_p^t\|^2\right] \nonumber\\
    &\le \frac{1}{\mu^2}\,\E_{\boldsymbol{u}}\!\left[\left(\frac{\mu^2}{2}\Gamma^2\|\boldsymbol{u}\|^2\right)^2\|\boldsymbol{u}\|^2\right] \nonumber\\
    &= \frac{\mu^2}{4}\Gamma^4\,\E_{\boldsymbol{u}}[\|\boldsymbol{u}\|^6], 
    \label{eq:B_bound}
\end{align}
where $\boldsymbol{u}\sim\mathcal{N}(\mathbf{0}, \boldsymbol{I}_{d_c})$ and the sixth moment of $\boldsymbol{u}$ is given by $\E[\|\boldsymbol{u}\|^6] = d_c(d_c+2)(d_c+4)$.

Combining \eqref{eq:ab_split}, \eqref{eq:A_bound}, and \eqref{eq:B_bound}, we have: 
\begin{align}
    \E_{\mathcal{U}^t}\|\hat{\boldsymbol{g}}_{c,m}^t\|^2
    &\le 2\left(1+\frac{d_c+1}{P}\right)\|\boldsymbol{g}_{c,m}^t\|^2
     + 2 \left( \frac{\mu^2}{4}\Gamma^4 d_c(d_c+2)(d_c+4) \right) \nonumber \\
    &= C_1 \|\boldsymbol{g}_{c,m}^t\|^2 + \frac{\mu^2}{2} d_c(d_c+2)(d_c+4) \Gamma^4.
\end{align}
This completes the proof.
\end{proof}

\subsection{Proof of Lemma \ref{lemma:server_moment_bound}}
\label{app:proof_server_moments}

\begin{lemma}[Server-side Second Moment Bound]
\label{lemma:server_moment_bound}
Let $\boldsymbol{g}_s^t = \frac{1}{K}\sum_{m \in \mathcal{S}_t} \boldsymbol{g}_{s,m}^t$ be the aggregated server gradient. Under Assumption \ref{ass:stochastic_gradients}, the second moment of the server update is bounded by:
\begin{equation}
    \E_t \| \boldsymbol{g}_s^t \|^2 \leq \| \nabla_{\thetas} \mathcal{L}(\boldsymbol{\theta}^t) \|^2 + \Omega_s,
\end{equation}
where $\Omega_s := \frac{\sigma^2 + \kappa^2}{K}$ represents the server-side variance.
\end{lemma}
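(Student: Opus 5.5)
Throughout, write $\bar{\boldsymbol{g}}_s^t := \frac{1}{K}\sum_{m\in\mathcal{S}_t}\nabla_{\thetas}\mathcal{L}_m(\boldsymbol{\theta}^t)$ for the average of the local full gradients over the sampled clients. The plan is a bias--variance decomposition of $\boldsymbol{g}_s^t$ with two nested conditional expectations: first over the data samples $\boldsymbol{\xi}^t$ given $\mathcal{S}_t$, then over the client sampling $\mathcal{S}_t$. Since $\boldsymbol{g}_{s,m}^t$ is the $\thetas$-block of $\nabla_{\boldsymbol{\theta}}\ell(\boldsymbol{\theta}^t;\xi_m^t)$, Assumption~\ref{ass:stochastic_gradients} transfers directly to the server block. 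Unbiasedness gives $\E[\boldsymbol{g}_{s,m}^t\mid\mathcal{F}_t,\mathcal{S}_t]=\nabla_{\thetas}\mathcal{L}_m(\boldsymbol{\theta}^t)$. Because a coordinate block has no larger norm than the full vector, the variance bound $\sigma^2$ and the dissimilarity bound $\kappa^2$ also hold for the $\thetas$-components.

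\textbf{Step 1 (data noise).} Write $\boldsymbol{g}_s^t=\bar{\boldsymbol{g}}_s^t+\frac{1}{K}\sum_{m\in\mathcal{S}_t}(\boldsymbol{g}_{s,m}^t-\nabla_{\thetas}\mathcal{L}_m)$. Conditioned on $\mathcal{S}_t$, the noise terms are independent across clients and have zero mean. The cross term and the off-diagonal terms therefore vanish, which gives
\[
\E_t\|\boldsymbol{g}_s^t\|^2=\E_{\mathcal{S}_t}\|\bar{\boldsymbol{g}}_s^t\|^2+\frac{1}{K^2}\sum_{m\in\mathcal{S}_t}\E\|\boldsymbol{g}_{s,m}^t-\nabla_{\thetas}\mathcal{L}_m\|^2\le \E_{\mathcal{S}_t}\|\bar{\boldsymbol{g}}_s^t\|^2+\frac{\sigma^2}{K}.
\]

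\textbf{Step 2 (sampling noise).} Decompose $\bar{\boldsymbol{g}}_s^t=\nabla_{\thetas}\mathcal{L}+\frac1K\sum_{m\in\mathcal{S}_t}(\nabla_{\thetas}\mathcal{L}_m-\nabla_{\thetas}\mathcal{L})$. Under uniform sampling each client is included with probability $K/M$, so $\E_{\mathcal{S}_t}\bar{\boldsymbol{g}}_s^t=\nabla_{\thetas}\mathcal{L}(\boldsymbol{\theta}^t)$ and the cross term vanishes. For the remaining variance term, take sampling with replacement, i.e.\ i.i.d.\ uniform draws. Each draw then has zero mean and second moment at most $\kappa^2$, so the variance is at most $\kappa^2/K$. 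Under sampling without replacement the variance carries the extra factor $\frac{M-K}{M-1}\le 1$, so the same bound $\kappa^2/K$ still holds. Combining Steps 1 and 2 yields $\|\nabla_{\thetas}\mathcal{L}\|^2+(\sigma^2+\kappa^2)/K$.

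\textbf{Main obstacle.} The delicate part is Step 2: the claimed $1/K$ rate for the heterogeneity term depends on the sampling model. Under sampling without replacement, the cross-client covariance terms equal $-\frac{1}{M-1}\cdot\frac1M\sum_m\|\nabla_{\thetas}\mathcal{L}_m-\nabla_{\thetas}\mathcal{L}\|^2$. This uses $\sum_m(\nabla_{\thetas}\mathcal{L}_m-\nabla_{\thetas}\mathcal{L})=0$, which holds because $\mathcal{L}$ is the exact uniform average of the $\mathcal{L}_m$. These negative covariances are what yield the finite-population factor, so the bound holds for either sampling model. Step 1 is routine once independence of the $\xi_m^t$ given $(\mathcal{F}_t,\mathcal{S}_t)$ is invoked.
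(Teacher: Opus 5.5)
Your proposal is correct and follows essentially the same route as the paper. Both split $\boldsymbol{g}_s^t-\nabla_{\thetas}\mathcal{L}(\boldsymbol{\theta}^t)$ into two parts, with the cross term eliminated by conditioning on $\mathcal{S}_t$:
\begin{itemize}
\item a data-sampling noise term, bounded by $\sigma^2/K$ via independence and conditional unbiasedness across clients;
\item a client-sampling noise term, bounded by $\kappa^2/K$.
\end{itemize}

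Your one departure is a refinement. The paper claims the client-sampling cross terms vanish, which holds only for i.i.d.\ draws. Your finite-population computation shows that for the size-$K$ set the algorithm actually samples, these cross terms are negative, giving the factor $\frac{M-K}{M-1}\le 1$ and hence the same bound.
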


\begin{proof}
We apply the bias–variance decomposition:
\begin{align}
    \mathbb{E}_t \| \boldsymbol{g}_s^t \|^2 
    &= \| \mathbb{E}_t [\boldsymbol{g}_s^t] \|^2 + \mathbb{E}_t \| \boldsymbol{g}_s^t - \mathbb{E}_t [\boldsymbol{g}_s^t] \|^2 \nonumber \\
    &\stackrel{(a)}{=} \| \nabla_{\boldsymbol{\theta}_s} \mathcal{L}(\boldsymbol{\theta}^t) \|^2 + \mathbb{E}_t \| \boldsymbol{g}_s^t - \mathbb{E}_t [\boldsymbol{g}_s^t] \|^2 \nonumber.
\end{align}
where $(a)$ holds because $\mathbb{E}_t[\boldsymbol{g}_s^t] = \nabla_{\boldsymbol{\theta}_s} \mathcal{L}(\boldsymbol{\theta}^t)$ under Assumption \ref{ass:stochastic_gradients}.

We decompose the second term into two components: the intra-client data noise and the inter-client sampling noise:
\begin{equation}
    \boldsymbol{g}_s^t - \nabla_{\boldsymbol{\theta}_s} \mathcal{L}(\boldsymbol{\theta}^t) = \underbrace{\frac{1}{K} \sum_{m \in \mathcal{S}_t} (\boldsymbol{g}_{s,m}^t - \nabla_{\boldsymbol{\theta}_s} \mathcal{L}_m(\boldsymbol{\theta}^t))}_{\text{Data Sampling Noise } \Delta_1^t} + \underbrace{\left( \frac{1}{K} \sum_{m \in \mathcal{S}_t} \nabla_{\boldsymbol{\theta}_s} \mathcal{L}_m(\boldsymbol{\theta}^t) - \nabla_{\boldsymbol{\theta}_s} \mathcal{L}(\boldsymbol{\theta}^t) \right)}_{\text{Client Sampling Noise } \Delta_2^t}.
\end{equation}
Expanding the squared norm, the cross-term vanishes because the expectation of $\Delta_1^t$ is zero:
\begin{equation}
    \mathbb{E}_t [\langle \Delta_1^t, \Delta_2^t \rangle] = \mathbb{E}_{\mathcal{S}_t} \left[ \langle \mathbb{E}_{\boldsymbol{\xi}^t}[\Delta_1^t | \mathcal{S}_t], \Delta_2^t \rangle \right] = 0.
\end{equation}
Thus, the variance decomposes into the sum of the individual variances:
\begin{equation}
    \mathbb{E}_t \| \boldsymbol{g}_s^t - \nabla_{\boldsymbol{\theta}_s} \mathcal{L}(\boldsymbol{\theta}^t) \|^2 = \mathbb{E}_t \| \Delta_1^t \|^2 + \mathbb{E}_{\mathcal{S}_t} \| \Delta_2^t \|^2.
\end{equation}
By decoupling the expectations and expanding the squared norm, we obtain:
$$\begin{aligned}
\mathbb{E}_t \|\Delta_1^t\|^2 &= \mathbb{E}_{\mathcal{S}_t} \left[ \mathbb{E}_{\boldsymbol{\xi}^t} \left\| \frac{1}{K} \sum_{m \in \mathcal{S}_t} (\boldsymbol{g}_{s,m}^t - \nabla_{\boldsymbol{\theta}_s} \mathcal{L}_m(\boldsymbol{\theta}^t)) \right\|^2 \right] \\
&\overset{(b)}{=} \mathbb{E}_{\mathcal{S}_t} \left[ \frac{1}{K^2} \sum_{m \in \mathcal{S}_t} \mathbb{E}_{\boldsymbol{\xi}^t} \| \boldsymbol{g}_{s,m}^t - \nabla_{\boldsymbol{\theta}_s} \mathcal{L}_m(\boldsymbol{\theta}^t) \|^2 \right] \\
&\overset{(c)}{\leq} \mathbb{E}_{\mathcal{S}_t} \left[ \frac{1}{K^2} \sum_{m \in \mathcal{S}_t} \sigma^2 \right] \\
&= \frac{\sigma^2}{K},
\end{aligned}$$where $(b)$ holds because the data sampling is independent across clients and the local stochastic gradients are unbiased, causing the cross-terms to vanish; $(c)$ uses the Bounded Variance property in Assumption \ref{ass:stochastic_gradients}.

Next, we bound the client sampling noise term $\mathbb{E}_{\mathcal{S}_t}\|\Delta_2^t\|^2$ following a similar procedure:$$\begin{aligned}
\mathbb{E}_{\mathcal{S}_t} \|\Delta_2^t\|^2 &= \mathbb{E}_{\mathcal{S}_t} \left\| \frac{1}{K} \sum_{m \in \mathcal{S}_t} (\nabla_{\boldsymbol{\theta}_s} \mathcal{L}_m(\boldsymbol{\theta}^t) - \nabla_{\boldsymbol{\theta}_s} \mathcal{L}(\boldsymbol{\theta}^t)) \right\|^2 \\
&\overset{(d)}{=} \frac{1}{K^2} \sum_{m \in \mathcal{S}_t} \mathbb{E}_{\mathcal{S}_t} \| \nabla_{\boldsymbol{\theta}_s} \mathcal{L}_m(\boldsymbol{\theta}^t) - \nabla_{\boldsymbol{\theta}_s} \mathcal{L}(\boldsymbol{\theta}^t) \|^2 \\
&\overset{(e)}{\leq} \frac{1}{K^2} \sum_{m \in \mathcal{S}_t} \kappa^2 \\
&= \frac{\kappa^2}{K},
\end{aligned}$$where $(d)$ is due to the independent and uniform sampling of clients, which ensures $\mathbb{E}_{\mathcal{S}_t}[\nabla_{\boldsymbol{\theta}_s} \mathcal{L}_m(\boldsymbol{\theta}^t)] = \nabla_{\boldsymbol{\theta}_s} \mathcal{L}(\boldsymbol{\theta}^t)$ and eliminates the cross-terms; $(e)$ applies the Bounded Gradient Dissimilarity property in Assumption \ref{ass:stochastic_gradients}.

Combining the squared bias and the variances derived above yields the final result:
\begin{equation}
    \mathbb{E}_t \| \boldsymbol{g}_s^t \|^2 \leq \| \nabla_{\boldsymbol{\theta}_s} \mathcal{L}(\boldsymbol{\theta}^t) \|^2 + \frac{\sigma^2 + \kappa^2}{K}.
\end{equation}
Defining $\Omega_s := \frac{\sigma^2 + \kappa^2}{K}$ completes the proof.
\end{proof}

\subsection{Proof of Lemma \ref{lemma:client_moment_bound}}
\begin{lemma}[Client-side Second Moment Bound]
\label{lemma:client_moment_bound}
Let $\hat{\boldsymbol{g}}_c^t = \frac{1}{K}\sum_{m \in \mathcal{S}_t} \hat{\boldsymbol{g}}_{c,m}^t$ be the aggregated client ZO gradient. Under Assumptions \ref{ass:smoothness} and \ref{ass:stochastic_gradients}, the second moment of the client update is bounded by:
\begin{equation}
    \E_t \| \hat{\boldsymbol{g}}_c^t \|^2 \leq C_1 \| \nabla_{\thetac} \mathcal{L}(\boldsymbol{\theta}^t) \|^2 + \Omega_c,
\end{equation}
where $\Omega_c := C_1 (\sigma^2 + \kappa^2) + \sigma_{ZO}^2$ collects the combined client-side variance, including stochastic gradient noise, client heterogeneity, and ZO estimation error.
\end{lemma}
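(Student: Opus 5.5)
The plan is to reduce the aggregated bound to the per-client bound of Lemma~\ref{lemma:ZO_bound} and then control the remaining first-order term with a bias--variance argument, in the same way as Lemma~\ref{lemma:server_moment_bound}.

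\textbf{Step 1: Jensen over clients.} Since $\hat{\boldsymbol{g}}_c^t$ is the average of $\hat{\boldsymbol{g}}_{c,m}^t$ over $m\in\mathcal{S}_t$, convexity of $\|\cdot\|^2$ gives
\[
\|\hat{\boldsymbol{g}}_c^t\|^2 \le \frac{1}{K}\sum_{m\in\mathcal{S}_t}\|\hat{\boldsymbol{g}}_{c,m}^t\|^2 .
\]
\textbf{Step 2: Condition and apply Lemma~\ref{lemma:ZO_bound}.} By the tower property I condition on $(\mathcal{F}_t,\mathcal{S}_t,\boldsymbol{\xi}^t)$ and take $\E_{\mathcal{U}^t}$ first. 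Lemma~\ref{lemma:ZO_bound} then bounds each summand by $C_1\|\boldsymbol{g}_{c,m}^t\|^2+\sigma_{ZO}^2$. This uses the global $\Gamma$, so $\sigma_{ZO}^2$ is deterministic. Note that the shared perturbations $\mathcal{U}^t$ across clients cause no difficulty, because Jensen has already removed all cross terms. The result is
\[
\E_t\|\hat{\boldsymbol{g}}_c^t\|^2 \le C_1\,\E_t\Big[\frac{1}{K}\sum_{m\in\mathcal{S}_t}\|\boldsymbol{g}_{c,m}^t\|^2\Big]+\sigma_{ZO}^2 .
\]

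\textbf{Step 3: Bias--variance decomposition of each client gradient.} For each selected $m$, I write
\[
\boldsymbol{g}_{c,m}^t=\nabla_{\thetac}\mathcal{L}(\boldsymbol{\theta}^t)+\big(\nabla_{\thetac}\mathcal{L}_m-\nabla_{\thetac}\mathcal{L}\big)+\big(\boldsymbol{g}_{c,m}^t-\nabla_{\thetac}\mathcal{L}_m\big).
\]
\begin{itemize}
\item Taking $\E_{\xi_m}$ kills the cross terms involving the last (data-noise) piece by unbiasedness, and the variance assumption bounds that piece by $\sigma^2$.
\item Averaging over uniform client sampling, $\E_{\mathcal{S}_t}[\frac1K\sum_{m\in\mathcal{S}_t}(\cdot)]=\frac1M\sum_{m=1}^M(\cdot)$. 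The cross term between $\nabla_{\thetac}\mathcal{L}$ and the heterogeneity piece then vanishes, since $\frac1M\sum_m\nabla\mathcal{L}_m=\nabla\mathcal{L}$.
\item The heterogeneity piece is at most $\kappa^2$, because its $\thetac$-block is dominated by the full-vector dissimilarity.
\end{itemize}
Together these give $\E_t\frac1K\sum\|\boldsymbol{g}_{c,m}^t\|^2\le\|\nabla_{\thetac}\mathcal{L}(\boldsymbol{\theta}^t)\|^2+\sigma^2+\kappa^2$. Multiplying by $C_1$ and adding $\sigma_{ZO}^2$ yields exactly $\Omega_c=C_1(\sigma^2+\kappa^2)+\sigma_{ZO}^2$.

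\textbf{Main obstacle.} The delicate point is the legitimacy of Step 2. Lemma~\ref{lemma:ZO_bound} holds conditionally with $\Gamma$ defined via quantities that depend on $\mathcal{U}^t$ (the Hessian at the intermediate Taylor point). I would handle this by treating $\Gamma$ as a deterministic almost-sure upper bound, as in Definition~\ref{def:gamma}'s supremum, so that the conditional bound integrates cleanly.

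A secondary check is that the cross terms in Step 3 vanish exactly rather than only in bound. This relies on uniform sampling making each client's marginal inclusion equal, so nothing beyond $\|\nabla_{\thetac}\mathcal{L}\|^2+\sigma^2+\kappa^2$ survives. Smoothness is not actually needed here.
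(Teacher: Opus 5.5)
Your proposal is correct and follows essentially the same route as the paper: Jensen over the $K$ sampled clients, the per-client bound of Lemma~\ref{lemma:ZO_bound}, then $\E_{\xi_m}\|\boldsymbol{g}_{c,m}^t\|^2\le\|\nabla_{\thetac}\mathcal{L}_m(\boldsymbol{\theta}^t)\|^2+\sigma^2$, uniform-sampling averaging, and the identity $\frac1M\sum_m\|\nabla_{\thetac}\mathcal{L}_m\|^2=\|\nabla_{\thetac}\mathcal{L}\|^2+\frac1M\sum_m\|\nabla_{\thetac}\mathcal{L}_m-\nabla_{\thetac}\mathcal{L}\|^2$, which is your three-piece decomposition carried out in two stages. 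Your remark that $\Gamma$ must be read as a deterministic almost-sure bound (the paper's $\sup_t\Gamma_t$ is still random) is a fair clarification that the paper's argument also implicitly relies on.
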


\begin{proof}
By Jensen's inequality and the law of iterated expectations, we obtain:
\begin{align}
    \E_t \left\| \hat{\boldsymbol{g}}_c^t \right\|^2 
    &= \E_{\mathcal{S}_t} \left[ \E_{\boldsymbol{\xi}^t, \mathcal{U}^t} \left[ \left\| \frac{1}{K} \sum_{m \in \mathcal{S}_t} \hat{\boldsymbol{g}}_{c,m}^t \right\|^2 \,\middle|\, \mathcal{S}_t \right] \right] \nonumber \\
    &\leq \E_{\mathcal{S}_t} \left[ \frac{1}{K} \sum_{m \in \mathcal{S}_t} \E_{\boldsymbol{\xi}^t, \mathcal{U}^t} \left[ \| \hat{\boldsymbol{g}}_{c,m}^t \|^2 \,\middle|\, \mathcal{S}_t \right] \right].
\end{align}

Applying Lemma~\ref{lemma:ZO_bound} and Assumption~\ref{ass:stochastic_gradients} yields:
\begin{align}
    \E_{\boldsymbol{\xi}^t, \mathcal{U}^t} \left[ \| \hat{\boldsymbol{g}}_{c,m}^t \|^2 \,\middle|\, \mathcal{S}_t \right]
    &= \E_{\xi_m^t} \left[ \E_{\mathcal{U}^t} \left[ \| \hat{\boldsymbol{g}}_{c,m}^t \|^2 \,\middle|\, \xi_m^t, \mathcal{S}_t \right] \right] \nonumber\\
    &\leq \E_{\xi_m^t} \left[ C_1 \| \boldsymbol{g}_{c,m}^t \|^2 + \sigma_{ZO}^2 \right] \nonumber\\
    &\leq C_1 \left(\| \nabla_{\thetac} \mathcal{L}_m(\boldsymbol{\theta}^t) \|^2 + \sigma^2 \right) + \sigma_{ZO}^2.
\end{align}

Taking expectation over client sampling and using Bounded Gradient Dissimilarity from Assumption \ref{ass:stochastic_gradients} yields:
\begin{align}
    \E_t \| \hat{\boldsymbol{g}}_c^t \|^2
    &\leq \frac{1}{M} \sum_{m=1}^M \left( C_1 \|\nabla_{\thetac} \mathcal{L}_m(\boldsymbol{\theta}^t)\|^2 + C_1 \sigma^2 + \sigma_{ZO}^2 \right) \nonumber \\
    &= C_1 \left( \frac{1}{M} \sum_{m=1}^M \|\nabla_{\thetac} \mathcal{L}_m(\boldsymbol{\theta}^t)\|^2 \right) + C_1 \sigma^2 + \sigma_{ZO}^2 \nonumber \\
    &= C_1 \left( \|\nabla_{\thetac} \mathcal{L}(\boldsymbol{\theta}^t)\|^2 + \frac{1}{M} \sum_{m=1}^M \|\nabla_{\thetac} \mathcal{L}_m(\boldsymbol{\theta}^t)-\nabla_{\thetac} \mathcal{L}(\boldsymbol{\theta}^t)\|^2 \right) + C_1 \sigma^2 + \sigma_{ZO}^2 \nonumber \\
    &\leq C_1 \left( \|\nabla_{\thetac} \mathcal{L}(\boldsymbol{\theta}^t)\|^2 + \kappa^2 \right) + C_1 \sigma^2 + \sigma_{ZO}^2 \nonumber \\
    &= C_1 \|\nabla_{\thetac} \mathcal{L}(\boldsymbol{\theta}^t)\|^2 + C_1 (\sigma^2 + \kappa^2) + \sigma_{ZO}^2.
\end{align}

Substituting the definition of $\Omega_c$, we obtain the stated bound.
\end{proof}

\subsection{Proof of Lemma \ref{lemma:ZO_local_bias}}
\label{app:proof_ZO_local_bias}
\begin{lemma}[Local Bias of the ZO Gradient Estimator]
\label{lemma:ZO_local_bias}

Condition on $(\mathcal{F}_t,\mathcal{S}_t,\xi_m^t)$, so that $\thetac^t$, $\boldsymbol{\lambda}_m^t$, and $\boldsymbol{g}_{c,m}^t$ are fixed. Let $\hat{\boldsymbol{g}}_{c,m}^{t}$ be the zero-order gradient estimate for client $m$. The bias of this estimator relative to the true local stochastic gradient $\boldsymbol{g}_{c,m}^{t} = \nabla_{\thetac} \ell(\boldsymbol{\theta^{t}}; \xi_m^{t})$ satisfies:
\begin{equation}
    \| \E_{\mathcal{U}^{t}}[\hat{\boldsymbol{g}}_{c,m}^{t}] - \boldsymbol{g}_{c,m}^{t} \|^2 \leq \frac{\mu^2 \Gamma^4}{4} (d_c + 3)^3.
\end{equation}
\end{lemma}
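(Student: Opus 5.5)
Reuse the decomposition from the proof of Lemma~\ref{lemma:ZO_bound}. By the Taylor expansion \eqref{eq:dz_taylor_lagrange} and the chain rule \eqref{eq:gc_chain}, the estimator splits as $\hat{\boldsymbol{g}}_{c,m}^t=\boldsymbol{A}+\boldsymbol{B}$ as in \eqref{eq:hatg_AB}. Here $\boldsymbol{A}=\frac{1}{P}\sum_p \boldsymbol{u}_p^t(\boldsymbol{u}_p^t)^\top \boldsymbol{g}_{c,m}^t$ and $\boldsymbol{B}=\frac{1}{P\mu}\sum_p ((\boldsymbol{\lambda}_m^t)^\top\boldsymbol{R}_{m,p}^t)\boldsymbol{u}_p^t$. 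Conditioned on $(\mathcal{F}_t,\mathcal{S}_t,\xi_m^t)$, the vector $\boldsymbol{g}_{c,m}^t$ is fixed and $\E[\boldsymbol{u}\boldsymbol{u}^\top]=\boldsymbol{I}_{d_c}$, so $\E_{\mathcal{U}^t}[\boldsymbol{A}]=\boldsymbol{g}_{c,m}^t$. The bias is therefore exactly $\E_{\mathcal{U}^t}[\boldsymbol{B}]$, and the task reduces to bounding $\|\E_{\mathcal{U}^t}[\boldsymbol{B}]\|^2$.

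Since the $\boldsymbol{u}_p^t$ are i.i.d., $\E[\boldsymbol{B}]=\frac{1}{\mu}\E_{\boldsymbol{u}}[r(\boldsymbol{u})\boldsymbol{u}]$, where $r(\boldsymbol{u}):=(\boldsymbol{\lambda}_m^t)^\top\boldsymbol{R}(\boldsymbol{u})$. The pointwise bound \eqref{eq:BR_point} gives $|r(\boldsymbol{u})|\le\frac{\mu^2}{2}\Gamma^2\|\boldsymbol{u}\|^2$. I will use Jensen ($\|\E X\|\le\E\|X\|$) rather than Jensen on the squared norm:
\[
\|\E[\boldsymbol{B}]\|\le \frac{1}{\mu}\E\big[|r(\boldsymbol{u})|\,\|\boldsymbol{u}\|\big]\le \frac{\mu}{2}\Gamma^2\,\E\|\boldsymbol{u}\|^3 .
\]
Squaring yields $\frac{\mu^2\Gamma^4}{4}(\E\|\boldsymbol{u}\|^3)^2$.

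The remaining step is a Gaussian moment estimate of the form $\E\|\boldsymbol{u}\|^3\le (d_c+3)^{3/2}$. This is the standard Nesterov--Spokoiny bound $\E\|\boldsymbol{u}\|^p\le(d+p)^{p/2}$ for $p\ge2$. It can also be derived directly: by Lyapunov/Jensen, $\E\|\boldsymbol{u}\|^3\le(\E\|\boldsymbol{u}\|^4)^{3/4}=(d_c(d_c+2))^{3/4}\le (d_c+3)^{3/2}$, since $d_c(d_c+2)\le(d_c+3)^2$. Alternatively, $(\E\|\boldsymbol{u}\|^6)^{1/2}=\sqrt{d_c(d_c+2)(d_c+4)}\le(d_c+3)^{3/2}$ by AM--GM, since $(d_c+2)^2\le d_c(d_c+4)+4$; this route needs a small check. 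Either route gives $(\E\|\boldsymbol{u}\|^3)^2\le(d_c+3)^3$ and hence the claimed bound $\frac{\mu^2\Gamma^4}{4}(d_c+3)^3$.

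The main obstacle is rigor in the remainder term. The Lagrange-form Taylor remainder is not literally valid for vector-valued $f_c$, because the intermediate point $\tau_{m,p}^t$ differs across output coordinates. I will handle this the way the paper does: apply the mean-value form to the scalar function $\boldsymbol{\theta}_c\mapsto(\boldsymbol{\lambda}_m^t)^\top f_c(x_m^t;\boldsymbol{\theta}_c)$, which has a single intermediate point. Its Hessian has operator norm at most $\|\boldsymbol{\lambda}_m^t\|\,\|\mathcal{H}\|_{\mathrm{op}}\le\Gamma^2$ by Definition~\ref{def:gamma}. This gives the pointwise bound $|r(\boldsymbol{u})|\le\frac{\mu^2}{2}\Gamma^2\|\boldsymbol{u}\|^2$ without any integrability issue. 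After that, the argument is the two inequalities above.
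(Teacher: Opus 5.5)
Your proposal is correct and matches the paper's proof essentially step for step. It uses the same $\boldsymbol{A}+\boldsymbol{B}$ split with $\E_{\mathcal{U}^t}[\boldsymbol{A}]=\boldsymbol{g}_{c,m}^t$, Jensen on the unsquared norm of $\E[\boldsymbol{B}]=\frac{1}{\mu}\E[\delta(\boldsymbol{u})\boldsymbol{u}]$, the pointwise bound $|\delta(\boldsymbol{u})|\le\frac{\mu^2\Gamma^2}{2}\|\boldsymbol{u}\|^2$, and the Nesterov--Spokoiny estimate $\E\|\boldsymbol{u}\|^3\le(d_c+3)^{3/2}$ before squaring. Your additions are refinements rather than a different route: the Lyapunov derivation from $\E\|\boldsymbol{u}\|^4=d_c(d_c+2)$ is clean, and the sixth-moment route also works, though the inequality it actually needs is $d_c(d_c+4)\le(d_c+2)^2$. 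Applying the mean-value form to the scalar map $\thetac\mapsto(\boldsymbol{\lambda}_m^t)^\top f_c(x_m^t;\thetac)$ is more careful than the paper, which writes a single-intermediate-point Lagrange remainder for the vector-valued $f_c$ directly, so it is not quite ``the way the paper does''.
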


\begin{proof}
We aim to bound the norm of the bias vector $\boldsymbol{b}_m^t := \E_{\mathcal{U}^t}[\hat{\boldsymbol{g}}_{c,m}^t] - \boldsymbol{g}_{c,m}^t$.

Recall from the proof of Lemma \ref{lemma:ZO_bound} (Eq. \ref{eq:hatg_AB}) that the estimator decomposes into a primary gradient term and a residual term:
\begin{equation}
    \hat{\boldsymbol{g}}_{c,m}^t = \underbrace{\left( \frac{1}{P} \sum_{p=1}^P \boldsymbol{u}_p^t (\boldsymbol{u}_p^t)^\top \right) \boldsymbol{g}_{c,m}^t}_{\text{Gradient Term } \boldsymbol{A}} + \underbrace{\frac{1}{P\mu} \sum_{p=1}^P \delta_p^t \boldsymbol{u}_p^t}_{\text{Residual Term } \boldsymbol{B}},
\end{equation}
where the scalar residual is $\delta_p^t = (\boldsymbol{\lambda}_m^t)^\top \boldsymbol{R}_{p,m}^t$.

Taking the expectation with respect to the perturbations $\mathcal{U}^t = \{\boldsymbol{u}_p^t\}_{p=1}^P$:
\begin{enumerate}
    \item For the Gradient Term $\boldsymbol{A}$, since $\E[\boldsymbol{u}_p^t (\boldsymbol{u}_p^t)^\top] = \boldsymbol{I}_{d_c}$, we have:
    \begin{equation}
        \E_{\mathcal{U}^t} [\boldsymbol{A}] = \boldsymbol{I}_{d_c} \boldsymbol{g}_{c,m}^t = \boldsymbol{g}_{c,m}^t.
    \end{equation}

    \item For the Residual Term $\boldsymbol{B}$, due to the linearity of expectation and identical distribution of $\boldsymbol{u}_p^t$:
    \begin{equation}
        \E_{\mathcal{U}^t} [\boldsymbol{B}] = \frac{1}{\mu} \E_{\boldsymbol{u}^t} [\delta(\boldsymbol{u}^t) \boldsymbol{u}^t],
    \end{equation}
    where $\boldsymbol{u}^t \sim \mathcal{N}(\boldsymbol{0}, \boldsymbol{I}_{d_c})$ and $\delta(\boldsymbol{u}^t) = (\boldsymbol{\lambda}_m^t)^\top \boldsymbol{R}(\boldsymbol{u}^t)$.
\end{enumerate}

Thus, the local bias vector is strictly $\boldsymbol{b}_m^t = \frac{1}{\mu} \E_{\boldsymbol{u}^t} [\delta(\boldsymbol{u}^t) \boldsymbol{u}^t]$.
Use Jensen's inequality:
\begin{equation}
    \| \boldsymbol{b}_m^t \| \leq \frac{1}{\mu} \E_{\boldsymbol{u}^t} [ |\delta(\boldsymbol{u}^t)| \| \boldsymbol{u}^t \| ].
\end{equation}

From the proof in Lemma \ref{lemma:ZO_bound}, we have $\|\boldsymbol{\lambda}_m^t\| \le \Gamma$ and $\|\boldsymbol{R}(\boldsymbol{u}^t)\| \le \frac{\mu^2}{2}\Gamma \|\boldsymbol{u}^t\|^2$. Applying the Cauchy-Schwarz inequality:
\begin{equation}
    |\delta(\boldsymbol{u}^t)| = |(\boldsymbol{\lambda}_m^t)^\top \boldsymbol{R}(\boldsymbol{u}^t)| \leq \|\boldsymbol{\lambda}_m^t\| \|\boldsymbol{R}(\boldsymbol{u}^t)\| \leq \Gamma \cdot \frac{\mu^2}{2}\Gamma \|\boldsymbol{u}^t\|^2 = \frac{\mu^2 \Gamma^2}{2} \|\boldsymbol{u}^t\|^2.
\end{equation}

Substituting this back into the norm bound for $\boldsymbol{b}_m^t$:
\begin{align}
    \| \boldsymbol{b}_m^t \| &\leq \frac{1}{\mu} \E_{\boldsymbol{u}^t} \left[ \left( \frac{\mu^2 \Gamma^2}{2} \|\boldsymbol{u}^t\|^2 \right) \|\boldsymbol{u}^t\| \right] \\
    &= \frac{\mu \Gamma^2}{2} \E_{\boldsymbol{u}^t} [ \|\boldsymbol{u}^t\|^3 ].
\end{align}

For a Gaussian vector $\boldsymbol{u}^t \sim \mathcal{N}(\boldsymbol{0}, \boldsymbol{I}_{d_c})$, the $k$-th moment of its norm satisfies $\E[\|\boldsymbol{u}^t\|^k] \leq (d_c + k)^{k/2}$. For $k=3$:
\begin{equation}
    \E_{\boldsymbol{u}^t} [ \|\boldsymbol{u}^t\|^3 ] \leq (d_c + 3)^{3/2}.
\end{equation}

Therefore:
\begin{equation}
    \| \boldsymbol{b}_m^t \| \leq \frac{\mu \Gamma^2}{2} (d_c + 3)^{3/2}.
\end{equation}

Finally, squaring both sides yields the bound on the squared norm of the bias:
\begin{equation}
    \| \E_{\mathcal{U}^{t}}[\hat{\boldsymbol{g}}_{c,m}^{t}] - \boldsymbol{g}_{c,m}^{t} \|^2 = \| \boldsymbol{b}_m^t \|^2 \leq \frac{\mu^2 \Gamma^4}{4} (d_c + 3)^3.
\end{equation}
\end{proof}

\subsection{Proof of Lemma \ref{lemma:global_bias}}

\begin{lemma}[Global Bias Bound]
\label{lemma:global_bias}
Under the same assumptions as Lemma \ref{lemma:ZO_local_bias}, the squared norm of the difference between the expected aggregated gradient estimator and the true global gradient is bounded by:
\begin{equation}
    \| \E_t[\hat{\boldsymbol{g}}_c^t] - \nabla_{\thetac} \mathcal{L}(\boldsymbol{\theta}^t) \|^2 \leq \frac{\mu^2 \Gamma^4}{4} (d_c + 3)^3.
\end{equation}
\end{lemma}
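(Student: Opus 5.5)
The plan is to reduce the global statement to the local bias bound of Lemma~\ref{lemma:ZO_local_bias} by the tower property, then absorb the averaging over clients and data with Jensen's inequality.

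Write $\boldsymbol{b}_m^t := \E_{\mathcal{U}^t}[\hat{\boldsymbol{g}}_{c,m}^t] - \boldsymbol{g}_{c,m}^t$, which is measurable with respect to $\sigma(\mathcal{F}_t,\mathcal{S}_t,\xi_m^t)$. Since $\mathcal{U}^t$ is independent of $(\mathcal{S}_t,\boldsymbol{\xi}^t)$ given $\mathcal{F}_t$, the law of iterated expectations gives
\begin{equation*}
\E_t[\hat{\boldsymbol{g}}_c^t] = \E_{\mathcal{S}_t}\Big[\frac{1}{K}\sum_{m\in\mathcal{S}_t}\E_{\xi_m^t}\big[\boldsymbol{g}_{c,m}^t + \boldsymbol{b}_m^t\big]\Big].
\end{equation*}
By the unbiasedness part of Assumption~\ref{ass:stochastic_gradients}, $\E_{\xi_m^t}[\boldsymbol{g}_{c,m}^t] = \nabla_{\thetac}\mathcal{L}_m(\boldsymbol{\theta}^t)$. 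As in step $(d)$ of Lemma~\ref{lemma:server_moment_bound}, uniform client sampling gives $\E_{\mathcal{S}_t}\big[\frac{1}{K}\sum_{m\in\mathcal{S}_t}\nabla_{\thetac}\mathcal{L}_m(\boldsymbol{\theta}^t)\big] = \nabla_{\thetac}\mathcal{L}(\boldsymbol{\theta}^t)$. The gradient part therefore cancels exactly, and
\begin{equation*}
\E_t[\hat{\boldsymbol{g}}_c^t] - \nabla_{\thetac}\mathcal{L}(\boldsymbol{\theta}^t) = \E_{\mathcal{S}_t}\Big[\frac{1}{K}\sum_{m\in\mathcal{S}_t}\E_{\xi_m^t}[\boldsymbol{b}_m^t]\Big].
\end{equation*}

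Next I apply convexity of $\|\cdot\|^2$ (Jensen) twice: once over the outer expectations and once over the average of $K$ terms. This bounds the squared norm by $\E_{\mathcal{S}_t}\big[\frac{1}{K}\sum_{m\in\mathcal{S}_t}\E_{\xi_m^t}\|\boldsymbol{b}_m^t\|^2\big]$. Lemma~\ref{lemma:ZO_local_bias} bounds each $\|\boldsymbol{b}_m^t\|^2$ pointwise by $\frac{\mu^2\Gamma^4}{4}(d_c+3)^3$. Averaging a constant bound leaves it unchanged, which yields the claim.

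The main obstacle is the uniformity of $\Gamma$. Lemma~\ref{lemma:ZO_local_bias} uses $\Gamma_t$, which is defined through the realized $\mathcal{S}_t$, $\xi_m^t$, and even the Taylor intermediate points depending on $\boldsymbol{u}_p^t$. To take expectations over $\mathcal{S}_t$ and $\boldsymbol{\xi}^t$, I need the local bound to hold for every realization with a deterministic constant. I would handle this by interpreting $\Gamma$ as an almost-sure supremum over all clients, samples, and perturbations, in line with the global $\Gamma := \sup_t\Gamma_t$. Once that reading is fixed, $\Gamma$ can be pulled out of every expectation and the rest is routine.
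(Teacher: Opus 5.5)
Your proposal is correct and follows the paper's proof essentially step for step: the tower property isolates $\boldsymbol{b}_m^t$, unbiasedness plus uniform client sampling cancels the gradient part, and Jensen's inequality combined with the pointwise bound of Lemma~\ref{lemma:ZO_local_bias} finishes the argument. Your remark that $\Gamma$ must be a deterministic bound, uniform over clients, samples, and perturbation directions, is a fair point the paper leaves implicit, and it is exactly what is needed to pull the constant out of the expectations.
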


\begin{proof}
By the tower property,
\begin{align}
    \mathbb{E}_t[\hat{\boldsymbol{g}}_c^t]
    &= \mathbb{E}_{\mathcal{S}_t}\left[\mathbb{E}_{\boldsymbol{\xi}^t,\mathcal{U}^t}\left[\frac{1}{K}\sum_{m\in\mathcal{S}_t}\hat{\boldsymbol{g}}_{c,m}^t \,\middle|\, \mathcal{S}_t\right]\right] \nonumber\\
    &= \mathbb{E}_{\mathcal{S}_t}\left[\frac{1}{K}\sum_{m\in\mathcal{S}_t}\mathbb{E}_{\xi_m^t,\mathcal{U}^t}\left[\hat{\boldsymbol{g}}_{c,m}^t\right]\right],
\end{align}
where the second equality holds because $\hat{\boldsymbol{g}}_{c,m}^t$ depends only on $(\xi_m^t,\mathcal{U}^t)$ when conditioned on $(\mathcal{F}_t,\mathcal{S}_t)$.

For a specific client $m$, we define
\begin{equation}
    \boldsymbol{b}_m^t := \E_{\mathcal{U}^t}\!\left[\hat{\boldsymbol{g}}_{c,m}^t \,\middle|\, \xi_m^t\right] - \boldsymbol{g}_{c,m}^t.
\end{equation}
Then
\begin{align}
    \mathbb{E}_{\xi_m^t,\mathcal{U}^t}\left[\hat{\boldsymbol{g}}_{c,m}^t\right]
    &= \mathbb{E}_{\xi_m^t}\left[\mathbb{E}_{\mathcal{U}^t}\left[\hat{\boldsymbol{g}}_{c,m}^t \,\middle|\, \xi_m^t\right]\right] \nonumber\\
    &= \mathbb{E}_{\xi_m^t}\left[\boldsymbol{g}_{c,m}^t + \boldsymbol{b}_m^t\right] \nonumber\\
    &= \nabla_{\boldsymbol{\theta}_c}\mathcal{L}_m(\boldsymbol{\theta}^t) + \mathbb{E}_{\xi_m^t}[\boldsymbol{b}_m^t],
\end{align}
where the last line uses the unbiasedness part of Assumption \ref{ass:stochastic_gradients}.
Substituting this identity back and using uniform client sampling, we obtain
\begin{equation}
    \E_t[\hat{\boldsymbol{g}}_c^t] - \nabla_{\thetac} \mathcal{L}(\boldsymbol{\theta}^t)
    = \E_{\mathcal{S}_t}\left[\frac{1}{K}\sum_{m\in\mathcal{S}_t}\E_{\xi_m^t}[\boldsymbol{b}_m^t]\right].
\end{equation}

We now bound the squared norm of this difference: 
\begin{align}
    \left\| \mathbb{E}_t[\hat{\boldsymbol{g}}_c^t] - \nabla_{\boldsymbol{\theta}_c} \mathcal{L}(\boldsymbol{\theta}^t) \right\|^2 
    &\leq \mathbb{E}_{\mathcal{S}_t}\left[\left\| \frac{1}{K} \sum_{m \in \mathcal{S}_t} \E_{\xi_m^t}[\boldsymbol{b}_m^t] \right\|^2 \right] \nonumber \\
    &\leq \mathbb{E}_{\mathcal{S}_t}\left[ \frac{1}{K} \sum_{m \in \mathcal{S}_t} \left\| \E_{\xi_m^t}[\boldsymbol{b}_m^t] \right\|^2 \right] \nonumber \\
    &\leq \mathbb{E}_{\mathcal{S}_t}\left[ \frac{1}{K} \sum_{m \in \mathcal{S}_t} \E_{\xi_m^t}\left[\| \boldsymbol{b}_m^t \|^2\right] \right] \nonumber \\
    &\leq \mathbb{E}_{\mathcal{S}_t} \left[ \frac{1}{K} \sum_{m \in \mathcal{S}_t} \frac{\mu^2 \Gamma^4}{4} (d_c + 3)^3 \right] \nonumber \\
    &= \frac{\mu^2 \Gamma^4}{4} (d_c + 3)^3.
\end{align}
The first three inequalities are Jensen's inequality, and the last one uses Lemma \ref{lemma:ZO_local_bias}.
\end{proof}
\subsection{Proof of Theorem \ref{thm:convergence_main}}
\label{app:proof_theorem}
\mainConvergenceThm*
\begin{proof}
By the $\beta$-smoothness of the objective function (Assumption \ref{ass:smoothness}), for the update $\boldsymbol{\theta}^{t+1} = \boldsymbol{\theta}^t - \eta [\hat{\boldsymbol{g}}_c^t; \boldsymbol{g}_s^t]$, we have:
\begin{equation}
    \mathcal{L}(\boldsymbol{\theta}^{t+1}) \leq \mathcal{L}(\boldsymbol{\theta}^t) - \eta \langle \nabla \mathcal{L}(\boldsymbol{\theta}^t), [\hat{\boldsymbol{g}}_c^t; \boldsymbol{g}_s^t] \rangle + \frac{\beta \eta^2}{2} \| [\hat{\boldsymbol{g}}_c^t; \boldsymbol{g}_s^t] \|^2.
\end{equation}
Taking the conditional expectation $\E_t$ on both sides yields:
\begin{align}
    \E_t[\mathcal{L}(\boldsymbol{\theta}^{t+1})] \leq \mathcal{L}(\boldsymbol{\theta}^t) &- \eta \langle \nabla_{\thetac} \mathcal{L}(\boldsymbol{\theta}^t), \E_t[\hat{\boldsymbol{g}}_c^t] \rangle - \eta \langle \nabla_{\thetas} \mathcal{L}(\boldsymbol{\theta}^t), \E_t[\boldsymbol{g}_s^t] \rangle \notag \\
    &+ \frac{\beta \eta^2}{2} \left(\E_t \| \hat{\boldsymbol{g}}_c^t \|^2 + \E_t \| \boldsymbol{g}_s^t \|^2\right).
\end{align}

For the server side, we have $\E_t[\boldsymbol{g}_s^t] = \nabla_{\thetas} \mathcal{L}(\boldsymbol{\theta}^t)$, which implies:
\begin{equation}
    -\eta \langle \nabla_{\thetas} \mathcal{L}, \E_t[\boldsymbol{g}_s^t] \rangle = -\eta \| \nabla_{\thetas} \mathcal{L}(\boldsymbol{\theta}^t) \|^2.
\end{equation}
For the client side, applying Young's inequality ($-\langle \boldsymbol{a}, \boldsymbol{b} \rangle \leq \frac{1}{2}\|\boldsymbol{a}\|^2 + \frac{1}{2}\|\boldsymbol{b}\|^2$) yields:
\begin{align}
    -\eta \langle \nabla_{\thetac} \mathcal{L}, \E_t[\hat{\boldsymbol{g}}_c^t] \rangle 
    &= -\eta \langle \nabla_{\thetac} \mathcal{L}, \nabla_{\thetac} \mathcal{L} + (\E_t[\hat{\boldsymbol{g}}_c^t] - \nabla_{\thetac} \mathcal{L}) \rangle \\
    &= -\eta \| \nabla_{\thetac} \mathcal{L} \|^2 - \eta \langle \nabla_{\thetac} \mathcal{L}, \E_t[\hat{\boldsymbol{g}}_c^t] - \nabla_{\thetac} \mathcal{L} \rangle \\
    &\leq -\frac{\eta}{2} \| \nabla_{\thetac} \mathcal{L} \|^2 + \frac{\eta}{2} \| \E_t[\hat{\boldsymbol{g}}_c^t] - \nabla_{\thetac} \mathcal{L} \|^2.
\end{align}
Combining these, the total linear term is bounded by:
\begin{equation}
    -\eta \langle \nabla_{\thetac} \mathcal{L}, \E_t[\hat{\boldsymbol{g}}_c^t] \rangle - \eta \langle \nabla_{\thetas} \mathcal{L}, \E_t[\boldsymbol{g}_s^t] \rangle
    \leq -\eta \|\nabla_{\thetas} \mathcal{L}\|^2 - \frac{\eta}{2} \|\nabla_{\thetac} \mathcal{L}\|^2 + \frac{\eta}{2} \| \E_t[\hat{\boldsymbol{g}}_c^t] - \nabla_{\thetac} \mathcal{L} \|^2
\end{equation}

Next, we address the quadratic terms by substituting the second moment bounds from Lemmas \ref{lemma:server_moment_bound} and \ref{lemma:client_moment_bound}.
For the server gradients, we have $\E_t \| \boldsymbol{g}_s^t \|^2 \leq \| \nabla_{\thetas} \mathcal{L}(\boldsymbol{\theta}^t) \|^2 + \Omega_s$.
For the client gradients, we have $\E_t \| \hat{\boldsymbol{g}}_c^t \|^2 \leq C_1 \| \nabla_{\thetac} \mathcal{L}(\boldsymbol{\theta}^t) \|^2 + \Omega_c$.
Substituting these bounds back into the descent inequality and applying the learning rate condition $\eta \leq \frac{1}{2\beta C_1}$, we have: 
\begin{align}
    \mathbb{E}_t[\mathcal{L}(\boldsymbol{\theta}^{t+1})] 
    &\leq \mathcal{L}(\boldsymbol{\theta}^t) 
    - \left( \eta - \frac{\beta \eta^2}{2} \right) \|\nabla_{\boldsymbol{\theta}_s} \mathcal{L}(\boldsymbol{\theta}^t)\|^2
    - \left( \frac{\eta}{2} - \frac{\beta \eta^2 C_1}{2} \right) \|\nabla_{\boldsymbol{\theta}_c} \mathcal{L}(\boldsymbol{\theta}^t)\|^2 \notag \\
    &\quad + \frac{\eta}{2} \| \mathbb{E}_t[\hat{\boldsymbol{g}}_c^t] - \nabla_{\boldsymbol{\theta}_c} \mathcal{L}(\boldsymbol{\theta}^t) \|^2 
    + \frac{\beta \eta^2}{2} \left( \Omega_c + \Omega_s \right) \notag \\
    &\stackrel{(a)}{\leq} \mathcal{L}(\boldsymbol{\theta}^t) 
    - \frac{\eta}{2} \|\nabla_{\boldsymbol{\theta}_s} \mathcal{L}(\boldsymbol{\theta}^t)\|^2 
    - \frac{\eta}{4} \|\nabla_{\boldsymbol{\theta}_c} \mathcal{L}(\boldsymbol{\theta}^t)\|^2 \notag \\
    &\quad + \frac{\eta}{2} \| \mathbb{E}_t[\hat{\boldsymbol{g}}_c^t] - \nabla_{\boldsymbol{\theta}_c} \mathcal{L}(\boldsymbol{\theta}^t) \|^2 
    + \frac{\beta \eta^2}{2} \left( \Omega_c + \Omega_s \right) \notag \\
    &\stackrel{(b)}{\leq} \mathcal{L}(\boldsymbol{\theta}^t) - \frac{\eta}{4} \|\nabla \mathcal{L}(\boldsymbol{\theta}^t)\|^2 
    + \frac{\eta}{2} \| \mathbb{E}_t[\hat{\boldsymbol{g}}_c^t] - \nabla_{\boldsymbol{\theta}_c} \mathcal{L}(\boldsymbol{\theta}^t) \|^2 
    + \frac{\beta \eta^2}{2} \left( \Omega_c + \Omega_s \right).
\end{align}
where $(a)$ holds because $\eta \leq \frac{1}{2\beta C_1}$ implies the server coefficient $\eta - \frac{\beta \eta^2}{2} \geq \frac{\eta}{2}$ and the client coefficient $\frac{\eta}{2} - \frac{\beta \eta^2 C_1}{2} \geq \frac{\eta}{4}$; and $(b)$ follows by relaxing $-\frac{\eta}{2} \|\nabla_{\boldsymbol{\theta}_s}\|^2$ to $-\frac{\eta}{4} \|\nabla_{\boldsymbol{\theta}_s}\|^2$ to combine with the client term, yielding the total gradient norm $-\frac{\eta}{4} \|\nabla \mathcal{L}\|^2$.

Simultaneously, we apply \textbf{Lemma \ref{lemma:global_bias}} to bound the squared norm of the bias term:
\begin{equation}
    \| \E_t[\hat{\boldsymbol{g}}_c^t] - \nabla_{\thetac} \mathcal{L}(\boldsymbol{\theta}^t) \|^2 \leq \frac{\mu^2 \Gamma^4}{4} (d_c + 3)^3.
\end{equation}
Plugging this bias bound and the variance terms ($\Omega_c + \Omega_s$) into the descent inequality yields:
\begin{equation}
    \E_t[\mathcal{L}(\boldsymbol{\theta}^{t+1})] \leq \mathcal{L}(\boldsymbol{\theta}^t) - \frac{\eta}{4} \| \nabla \mathcal{L}(\boldsymbol{\theta}^t) \|^2 + \frac{\eta}{2} \left[ \frac{\mu^2 \Gamma^4}{4} (d_c + 3)^3 \right] + \frac{\beta \eta^2}{2} (\Omega_c + \Omega_s).
\end{equation}

Taking the total expectation $\E$ over the full history up to round $T-1$, summing over $t$, and rearranging to isolate the gradient norm, we have:
\begin{equation}
    \frac{\eta}{4} \sum_{t=0}^{T-1} \E \| \nabla \mathcal{L}(\boldsymbol{\theta}^t) \|^2 \leq \mathcal{L}(\boldsymbol{\theta}^0) - \E[\mathcal{L}(\boldsymbol{\theta}^T)] + \frac{T \eta}{2} \left[ \frac{\mu^2 \Gamma^4}{4} (d_c + 3)^3 \right] + \frac{T \beta \eta^2}{2} (\Omega_c + \Omega_s).
\end{equation}
Finally, using the fact that $\mathcal{L}(\boldsymbol{\theta}^T) \ge \mathcal{L}^*$ and dividing both sides by $\frac{T \eta}{4}$, we obtain the final convergence rate:
\begin{equation}
    \frac{1}{T} \sum_{t=0}^{T-1} \E \| \nabla \mathcal{L}(\boldsymbol{\theta}^t) \|^2 
    \leq \frac{4(\mathcal{L}(\boldsymbol{\theta}^0) - \mathcal{L}^*)}{\eta T} + 2 \eta \beta (\Omega_c + \Omega_s) + \frac{\mu^2 \Gamma^4}{2} (d_c + 3)^3.
\end{equation}
This completes the proof.
\end{proof}

\section{Detailed Experimental Settings}
\label{app:exp_details}

\subsection{Dataset and Data Partitioning}
\textbf{Vision Tasks.} We evaluate on CIFAR-10 and CIFAR-100, two standard image classification benchmarks with $32\times32$ color images. CIFAR-10 contains 10 classes with 50,000 training images and 10,000 test images, while CIFAR-100 contains 100 classes with 50,000 training images and 10,000 test images.
For both datasets, we employ standard data augmentation including random cropping, horizontal flipping and mean/std normalization.
For Non-IID settings, we partition the training data among clients using a Dirichlet distribution with concentration parameter $\alpha=1$, which induces moderate label heterogeneity across clients.

\textbf{Language Tasks.} We evaluate on three natural language understanding tasks from the GLUE benchmark.
SST-2 is a binary sentiment classification task (67,349 training examples / 872 validation examples / 1,821 test examples).
RTE is a binary textual entailment task (2,490 training / 277 validation / 3,000 test).
WSC is a coreference resolution task (554 training / 104 validation / 146 test).
SQuAD is a reading comprehension task (87,599 training / 10,570 validation / 9,616 test).
All language tasks are conducted in an IID setting.

\subsection{Model Architecture and Split Configuration}
\label{app:model_split}
\textbf{CV Model.} We use a ResNet-18 model pre-trained on ImageNet. The model consists of 4 residual blocks. We split the model after the 2nd residual block. During training, all Batch Normalization (BN) layers are frozen to prevent statistics divergence in non-IID settings.

\textbf{LLM Models.} We consider standard causal language models (decoder-only Transformers) consisting of an embedding layer followed by a stack of Transformer blocks and a language modeling head. In the following, ``layer'' refers to the Transformer blocks.
Unless otherwise specified, we used the following split configuration in our experiments:
\begin{itemize}
    \item \textbf{OPT-125M:} Total 12 layers. Client holds the first 3 layers.
    \item \textbf{Gemma-3-270M:} Total 20 layers. Client holds the first 5 layers.
    \item \textbf{LLaMA-3.2-1B:} Total 18 layers. Client holds the first 5 layers.
    \item \textbf{LLaMA-3.2-3B:} Total 28 layers. Client holds the first 5 layers.
    \item \textbf{Qwen3-8B:} Total 36 layers. Client holds the first 5 layers.
\end{itemize}

\subsection{Baseline Introduction}
\label{app:baselines}

\begin{itemize}
    \item \textbf{SFL}~\cite{thapa2022splitfed}: A hybrid framework amalgamating FL and SL to enable parallel training across distributed clients. We implement the SFLv1 variant, which performs parallel server-side aggregation, and set the local epoch to 1.

    \item \textbf{SplitLoRA}~\cite{lin2024splitlora}: A first-order SFL framework for LLMs that combines standard split learning with LoRA, which utilizes full backpropagation on both client and server sides. In our experiments, we set the local epoch to 1.

    \item \textbf{FSL-SAGE}~\cite{nair2025fslsage}: A communication-efficient framework that employs client-side auxiliary models to estimate server gradients, allowing clients to perform multiple local updates before communicating with the server. We implement this baseline with a local epoch of 1 and a server update interval of 5, while setting the auxiliary model alignment interval to 1. For the ResNet-18 experiments, the auxiliary model is constructed by cascading the first server-side ResNet block with the final fully-connected layer.
    
    \item \textbf{ZO-SFL}: A memory-efficient baseline constructed by integrating zeroth-order optimization into the standard SFL architecture. Implementing the Simultaneous Perturbation Stochastic Approximation (SPSA) estimator~\cite{malladi2023fine}, ZO-SFL performs two forward passes per step with symmetric random perturbations $\pm \mu \mathbf{z}$ applied to the model parameters. The gradient is estimated via the central difference method: $\hat{\mathbf{g}} = \frac{\mathcal{L}(\boldsymbol{\theta} + \mu \mathbf{z}) - \mathcal{L}(\boldsymbol{\theta} - \mu \mathbf{z})}{2\mu} \mathbf{z}$, enabling backpropagation-free training at the cost of dual activation transmissions. We set the perturbation magnitude $\mu = 10^{-3}$ and the local epoch to 1.

    \item \textbf{MU-SplitFed}~\cite{liang2026towards}: A straggler-resilient ZO framework that utilizes an unbalanced update mechanism to decouple client and server computations. We configure the server to perform $\tau=2$ local updates for every client interaction to maximize server-side computational utilization. 
\end{itemize}

\subsection{Hyperparameters}
\textbf{Batch Size and Client Participation.} We use a batch size of 32 for both vision and language tasks. For vision experiments, we use 100 clients in total and sample 10 clients per round; for language experiments, we use 10 clients in total and sample 3 clients per round.

\textbf{Low-Rank Adaptation(LoRA).} We apply LoRA for language tasks with rank $r=8$ and scaling factor $\alpha=16$ to the query and value projection matrices.

\textbf{Optimizer.} For HO-SFL, SFL, SplitLoRA, FSL-SAGE, and ZO-SFL, we use AdamW with learning rate $1\times 10^{-5}$, $\beta_1=0.9$, $\beta_2=0.999$, and weight decay $5\times 10^{-4}$. For MU-SplitFed, we follow the original paper and use standard SGD without momentum, with client learning rate 0.005, server learning rate 0.01, and global learning rate 0.3.

\textbf{Perturbation \& Smoothing.} Unless otherwise specified, we set $P=5$ for vision tasks and $P=2$ for language tasks, and use $\mu=10^{-3}$.

\subsection{Fairness in Comparison.}
In federated settings, \textit{communication rounds} are often used as a proxy for the communication budget. However, in our comparison, this metric is not directly comparable across methods because a ``round'' corresponds to different numbers of client updates.
Specifically, HO-SFL and MU-SplitFed perform client-side aggregation at every local step, whereas other methods aggregate after $E$ local epochs. Therefore, fixing the number of communication rounds would implicitly allow some methods to perform many more client updates than others, confounding optimization efficiency with update frequency.

To make the comparison fair, we use \textit{proceeded samples} as the unified horizontal axis and stopping criterion. Under our implementation, a batch size of processed samples corresponds to one client update step (one forward-backward or ZO update, depending on the method). Thus, matching proceeded samples ensures that all methods perform the same number of client update steps and consume the same amount of training data, regardless of how often they communicate.

\section{Additional Results}
\label{app:additional_results}
\begin{figure}[b]
    \centering
    \includegraphics[width=0.4\linewidth]{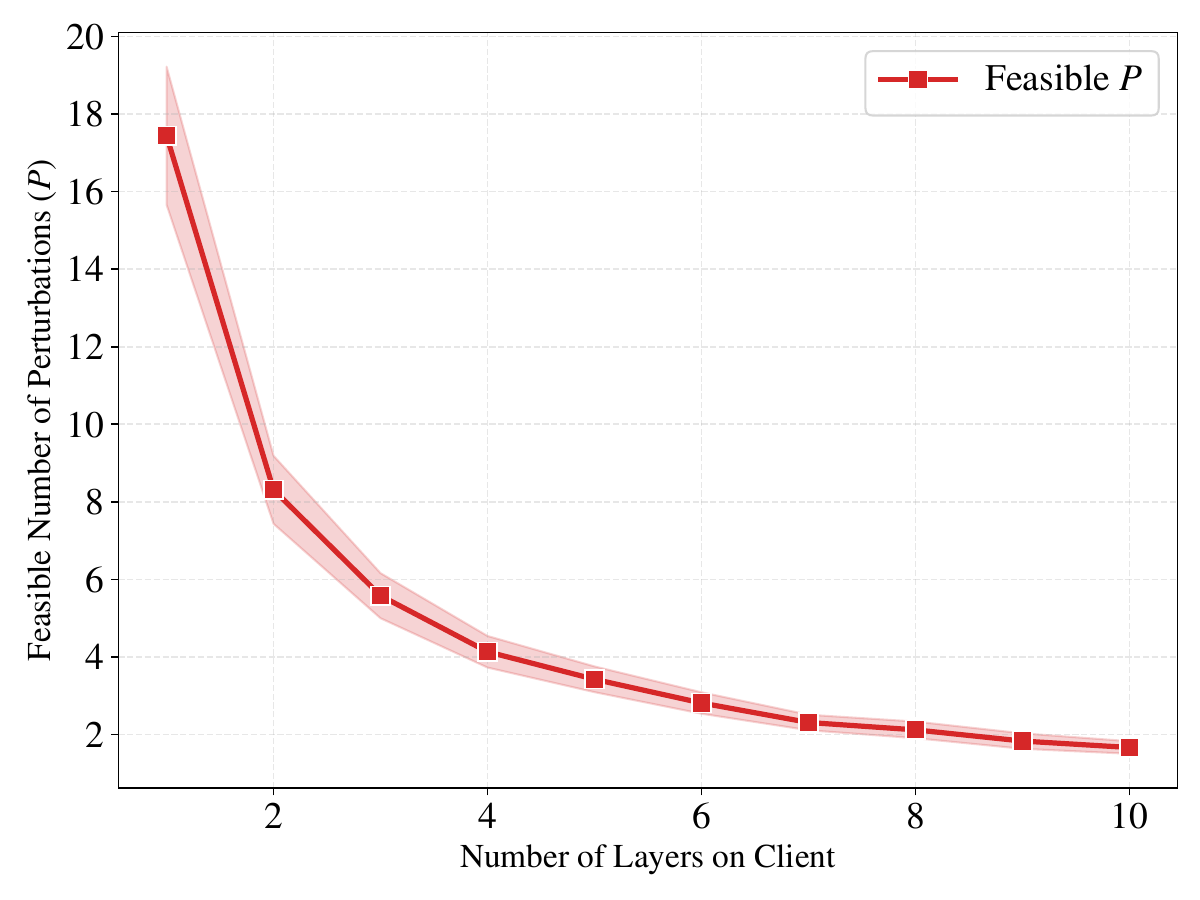} 
    
    \caption{Feasibility analysis of latency hiding under constrained edge resources.}
    \label{fig:latency_hiding}
\end{figure}
\subsection{Latency Hiding Feasibility Analysis}
\label{sec:latency_analysis}

To evaluate the feasibility of latency hiding in HO-SFL for resource-constrained devices, we simulated a split fine-tuning pipeline for the \textbf{LLaMA-3.2-1B} model.
Our simulation adopts conservative parameters to reflect real-world 5G edge computing scenarios, where uplink bandwidth is often the primary bottleneck.
The specific settings are detailed below:

\begin{itemize}
    \item \textbf{Network Environment:} Based on global 5G experience reports~\cite{narayanan2020first}, we model a standard 5G connection with an uplink bandwidth of 30 Mbps and a downlink bandwidth of 200 Mbps. The round-trip time (RTT) is set to 30 ms.
    
    \item \textbf{Client:} We simulate an edge AI accelerator with 2.0 TFLOPS (FP16) compute power. This is representative of current high-end embedded systems, such as the NVIDIA Jetson Orin Nano or flagship mobile NPUs.

    \item \textbf{Server:} The server is modeled as a cloud-grade NVIDIA A100 GPU with 312 TFLOPS (FP16) performance.
    
    \item \textbf{Workload:} We assume a batch size of 32 and a sequence length of 256. The simulation accounts for 10\% random noise in network and computing speed to generate error bands.
\end{itemize}

Figure~\ref{fig:latency_hiding} illustrates the maximum number of perturbation passes ($P$) that can be fully overlapped within the communication and server-side processing latency. As the client-side model deepens, the local forward pass time consumes more of the fixed latency budget.
However, for typical split learning configurations where the client retains the first few layers (e.g., 3-5 layers) to preserve data privacy, the system exhibits high tolerance. Specifically, with 4 client layers, the idle window accommodates approximately \textbf{4 perturbation passes} on average.
This confirms that under typical 5G edge conditions, the communication bottleneck and server processing latency naturally mask the computational overhead of the multiple forward passes required by HO-SFL.

\subsection{More Results on Vision Tasks}
In this section, we provide additional vision results to complement the main text.
Figure~\ref{fig:cifar100_convergence} reports the validation accuracy curves on CIFAR-100 under both IID and Non-IID settings, while Table~\ref{tab:transposed_comparison} summarizes the final test accuracy with standard deviation across CIFAR-10/100 and different data partitions.
\begin{table}[htbp]
\centering
\caption{Performance comparison across different vision tasks. The notation $Acc_{(Std)}$ denotes the test accuracy (\%) and its standard deviation. \textbf{Bold} indicates the best performance among comparative methods for each task.}
\label{tab:transposed_comparison}
\vskip 0.15in

\begin{small} 
\setlength{\tabcolsep}{5pt}
\begin{tabular}{lccccc}
\toprule
 & \multicolumn{2}{c}{\textbf{First Order}} & \multicolumn{2}{c}{\textbf{Zeroth Order}} & \textbf{Hybrid} \\
\cmidrule(lr){2-3} \cmidrule(lr){4-5} \cmidrule(lr){6-6}
\textbf{Task (Dataset)} & \textbf{SFL} & \textbf{FSL SAGE} & \textbf{ZO-SFL} & \textbf{Mu-SplitFed} & \textbf{HO-SFL (Ours)} \\
\midrule

CIFAR-10 (IID)     & $\mathbf{77.5_{(0.5)}}$ & $62.1_{(1.5)}$ & $12.3_{(1.0)}$ & $14.0_{(0.9)}$ & $75.0_{(0.3)}$ \\
CIFAR-10 (Non-IID) & $69.3_{(2.6)}$ & $37.9_{(2.9)}$ & $11.8_{(1.2)}$ & $13.6_{(1.0)}$ & $\mathbf{69.6_{(0.8)}}$ \\
\midrule
CIFAR-100 (IID)    & $43.2_{(0.4)}$ & $8.3_{(4.2)}$  & $1.2_{(0.2)}$  & $1.3_{(0.1)}$  & $\mathbf{44.2_{(0.3)}}$ \\
CIFAR-100 (Non-IID) & $39.5_{(0.6)}$ & $4.0_{(3.0)}$  & $1.2_{(0.1)}$  & $1.3_{(0.1)}$  & $\mathbf{42.5_{(0.9)}}$ \\

\bottomrule
\end{tabular}
\end{small}
\end{table}

\label{app:vision_more_results}
\begin{figure}[h]
    \centering
    \includegraphics[width=0.55\linewidth]{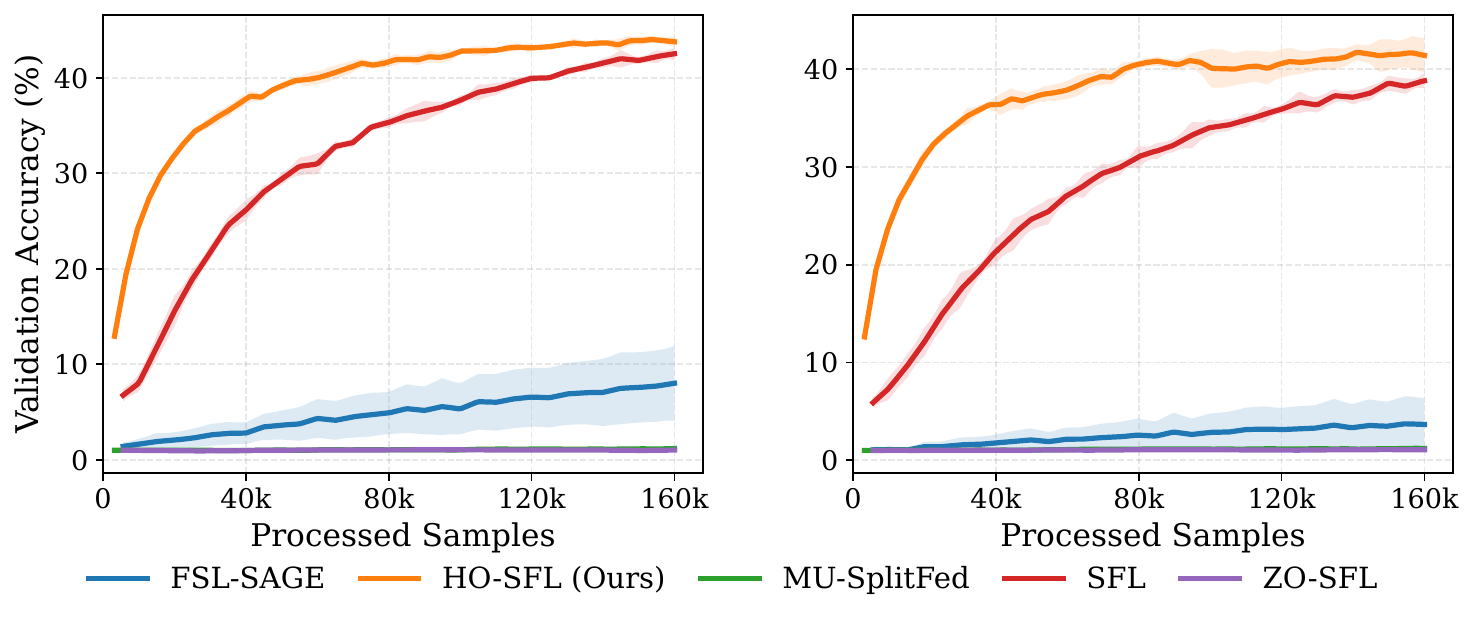}
    \caption{Validation accuracy convergence on CIFAR-100 under IID (left) and Non-IID (right) settings.}
    \label{fig:cifar100_convergence}
\end{figure}

\subsection{Ablation Study on Client Model Depth}
\label{subsec:ablation_depth}

To investigate the impact of the client-side model complexity on HO-SFL's performance, we conducted an ablation study by varying the split point of the LLaMA-3.2-1B model on the SST-2 task. 
Specifically, we varied the number of transformer layers retained on the client side (denoted as ``Cut Layer'') from 2 to 8, thereby directly altering the dimensionality of the client-side parameters $d_c$.

As illustrated in Figure~\ref{fig:cut_layer_ablation}, while HO-SFL achieves convergence across all settings, we observe a consistent trend: as the number of client-side layers increases, the convergence speed slightly decelerates, and the final accuracy exhibits a marginal decline. 
For instance, the configuration with 2 client layers (Cut Layer 2) converges most rapidly to approx. 94.5\%, whereas the 8-layer configuration (Cut Layer 8) saturates around 93.0\%.

This empirical observation aligns perfectly with our theoretical analysis in Section~\ref{sec:convergence}. 
Recall that Corollary~\ref{corr:rate} establishes the convergence rate of HO-SFL as $\mathcal{O}(\sqrt{d_c/PT})$. 
Since the client employs zeroth-order optimization, the estimator's variance is inherently sensitive to the problem dimension $d_c$. 
Increasing the cut layer count significantly expands $d_c$, thereby introducing higher variance into the gradient estimation and slightly hindering optimization efficiency. 
Nevertheless, the performance degradation remains minimal even with 8 layers, confirming that our hybrid-order design effectively mitigates the severe dimension-dependent slowdown typically observed in pure ZO methods.

\begin{figure}[tb]
    \centering
    \includegraphics[width=0.4\linewidth]{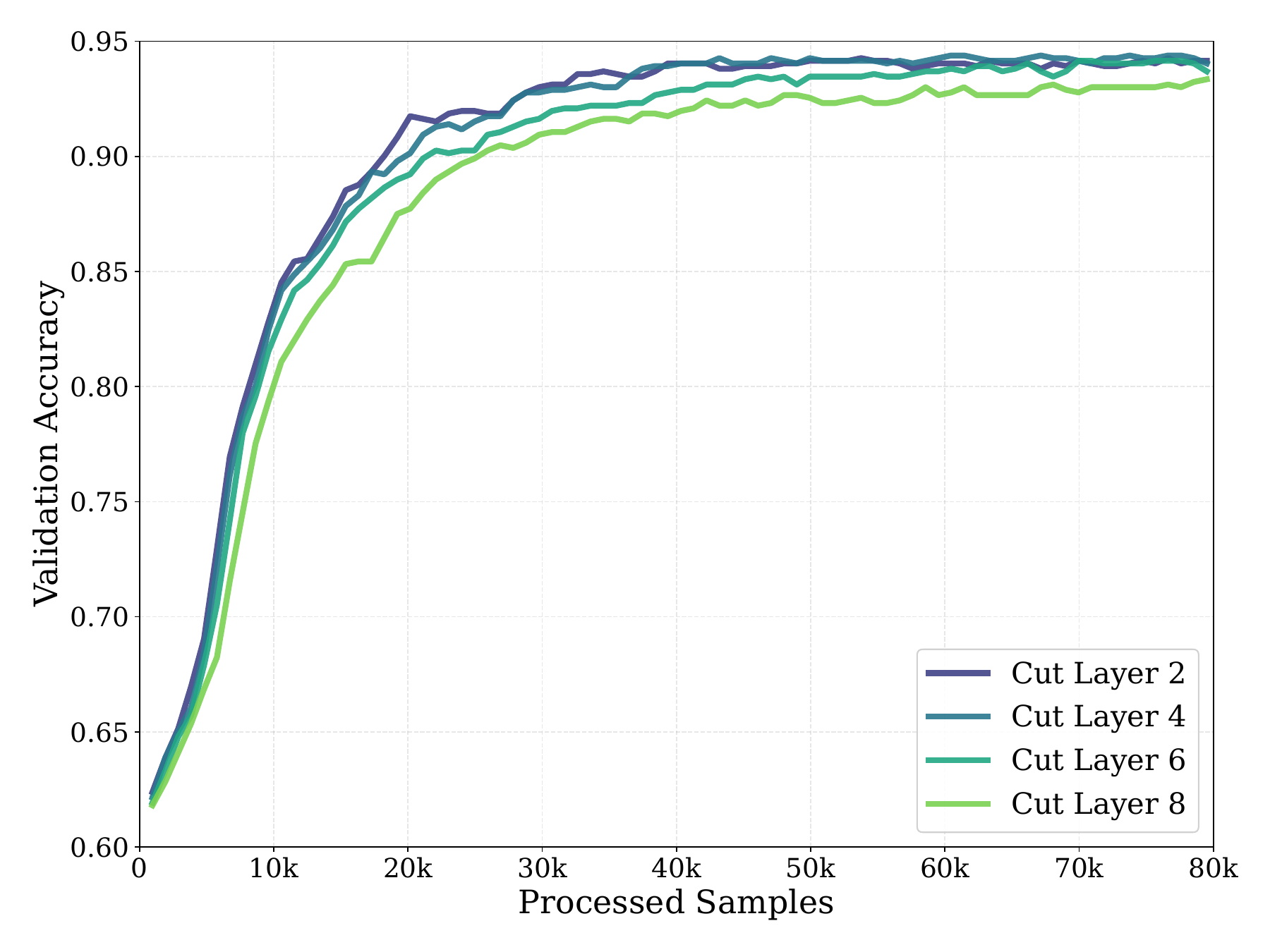} 
    \caption{Impact of client-side model depth on convergence. We evaluate LLaMA-3.2-1B on the SST-2 task with varying numbers of transformer layers (2, 4, 6, 8) allocated to the client. }
    \label{fig:cut_layer_ablation}
\end{figure}

\subsection{Ablation Study on $P$ and $\mu$}
\label{subsec:ablation_P_mu}

To analyze the impact of the number of perturbations $P$ and the smoothing parameter $\mu$, we conduct ablation studies using the ResNet-18 on the CIFAR-10 dataset. The results are illustrated in Figure~\ref{fig:ablation}.

\begin{figure}[h]
    \centering
    \begin{subfigure}[b]{0.3\linewidth}
        \centering
        \includegraphics[width=\linewidth]{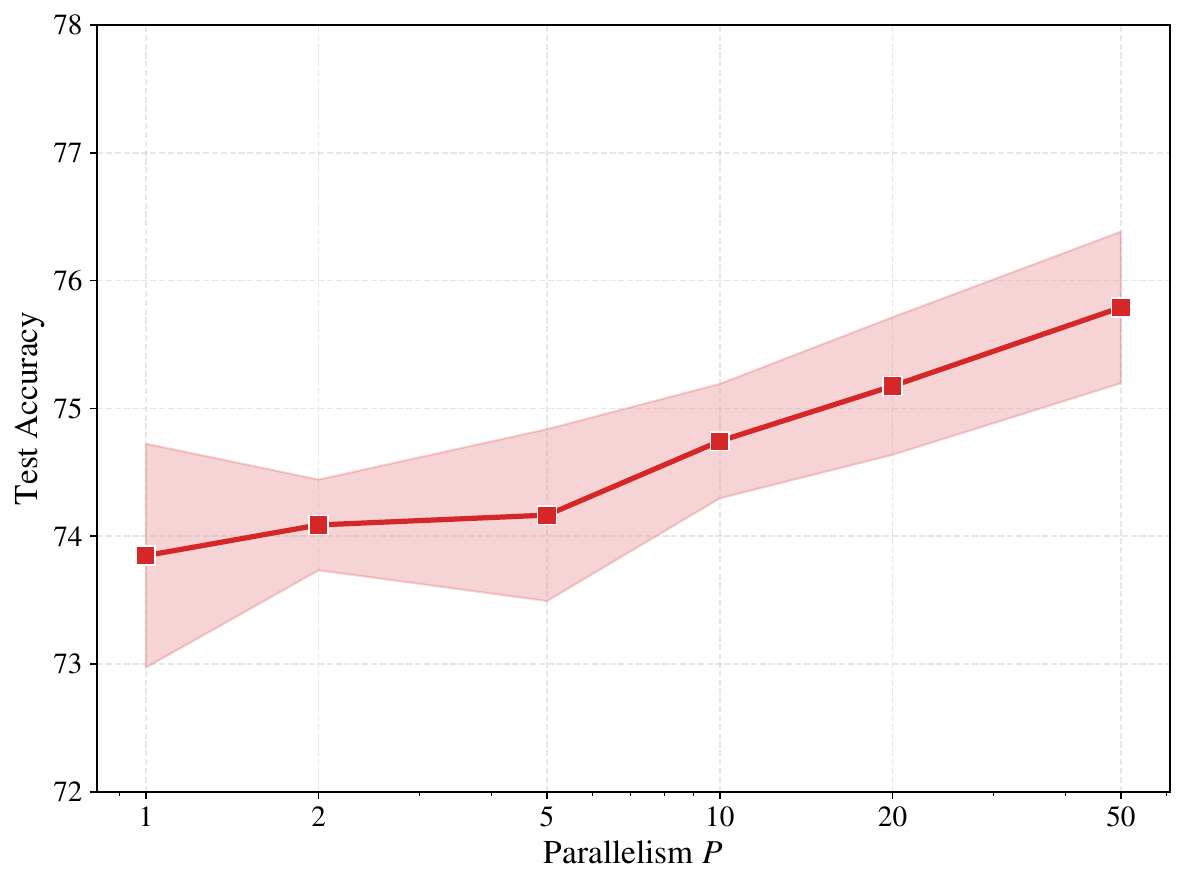}
        \caption{Impact of $P$}
        \label{fig:ablation_P}
    \end{subfigure}
    \begin{subfigure}[b]{0.3\linewidth}
        \centering
        \includegraphics[width=\linewidth]{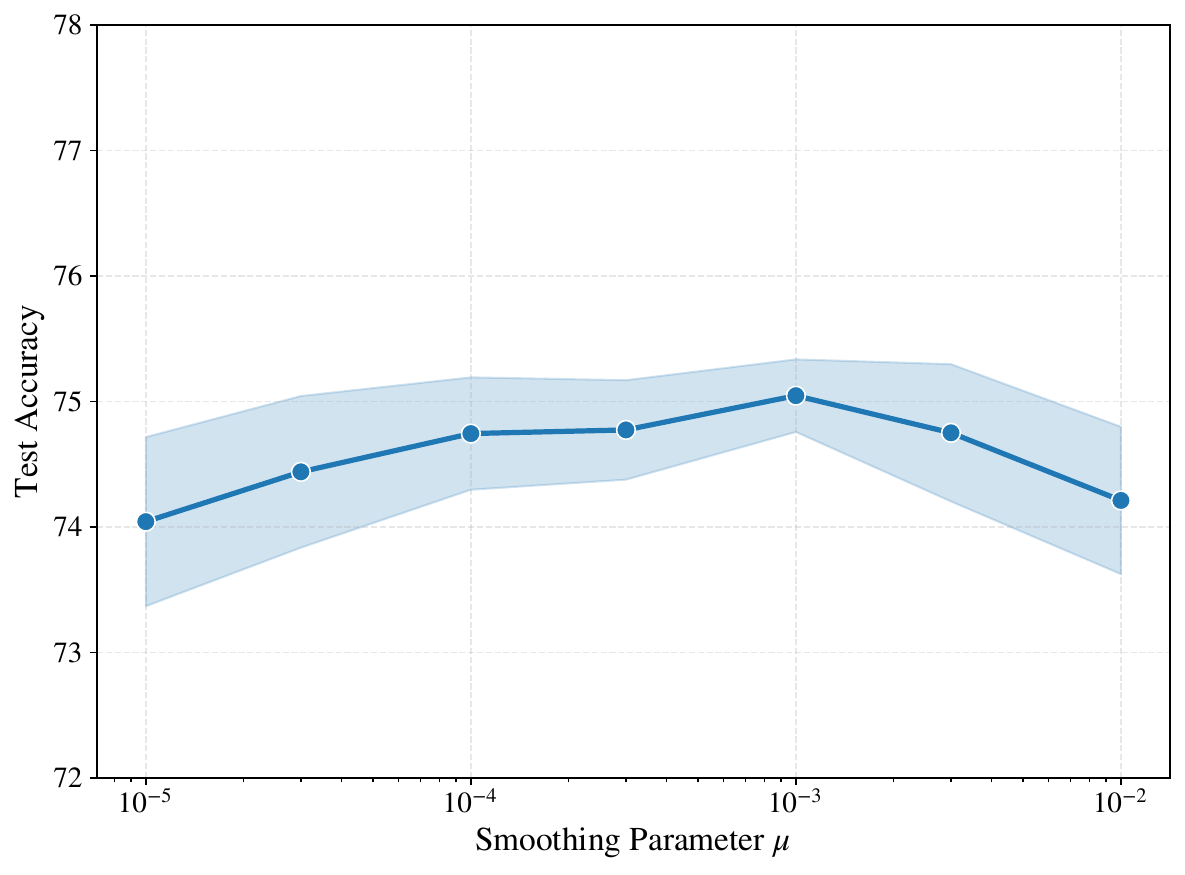}
        \caption{Impact of $\mu$}
        \label{fig:ablation_mu}
    \end{subfigure}
    \caption{Ablation studies on CIFAR-10 with ResNet-18, focusing on the perturbation number $P$ and the smoothing parameter $\mu$. Solid lines denote the mean performance, and shaded regions represent the standard deviation across 10 random seeds.}
    \label{fig:ablation}
\end{figure}

\textbf{Impact of Perturbation Number $P$.}
Figure~\ref{fig:ablation_P} reports the test accuracy as $P$ increases from 1 to 50. We fix the smoothing parameter to $\mu=10^{-4}$ in this ablation. Consistent with our theoretical analysis in Corollary~\ref{corr:rate}, increasing $P$ yields a strictly monotonic improvement in model performance. Since HO-SFL only requires transmitting scalar projections, increasing $P$ incurs negligible communication overhead.

\textbf{Sensitivity to Smoothing Parameter $\mu$.}
Figure~\ref{fig:ablation_mu} illustrates the impact of $\mu$, varying from $10^{-5}$ to $10^{-2}$. We fix the number of perturbations to $P=10$ in this ablation. As $\mu$ increases beyond $10^{-3}$, performance degrades. This aligns with Proposition~\ref{prop:zo_properties}, which states that the bias of the zeroth-order estimator scales with $\mathcal{O}(\mu^2 d_c)$. Conversely, when $\mu$ becomes too small, we observe a drop in accuracy. While theory suggests that a smaller $\mu$ reduces bias, practical implementations are bounded by machine precision~\cite{jongeneel2024small}, rendering the gradient estimate dominated by numerical noise rather than informative signals.

\subsection{Communication Breakdown Data}
\label{app:comm_data}
Table~\ref{tab:comm_breakdown_data} provides the raw data used for the communication efficiency analysis, recorded in the CIFAR-10 IID experiment. The values represent the \emph{total} communication cost (GB) over the entire training process.

\begin{table}[th]
\centering
\caption{Breakdown of total communication cost (GB).}
\label{tab:comm_breakdown_data}
\begin{small}
\begin{tabular}{lccccc}
\toprule
\textbf{Component} & \textbf{HO-SFL} & \textbf{SFL} & \textbf{FSL-SAGE} & \textbf{ZO-SFL} & \textbf{MU-SplitFed} \\
\midrule
Uplink (Act)            & 1.22 & 1.26 & 0.29 & 2.52 & 3.67 \\
Uplink (Model)          & 0.00 & 0.84 & 0.84 & 0.84 & 12.97 \\
Uplink (Scalar)         & 0.0002 & 0.0 & 0.0 & 0.0 & 0.0 \\
Downlink (Grad)         & 1.22 & 1.26 & 0.00 & 0.00 & 0.00 \\
Downlink (Model)        & 0.00 & 0.84 & 2.58 & 0.84 & 12.97 \\
Downlink (Scalar/Seed)  & 0.004 & 0.0 & 0.0 & 0.00002 & 0.00002 \\
\bottomrule
\end{tabular}
\end{small}
\end{table}

\subsection{Memory Profiling Methodology}
\label{app:memory_profiling}
To rigorously evaluate the baseline hardware requirements, we explicitly disable advanced memory-saving techniques, such as gradient accumulation and activation checkpointing~\cite{chen2016training}. The batch size is fixed at 32 across all experiments. Crucially, to accurately isolate and quantify the specific memory footprint of the client-side model, we implement a specialized \texttt{Mocker} module. This module simulates the server's interaction by accepting uploaded activations and returning dummy gradients or loss scalars with matching dimensions, thereby decoupling the client's memory usage from server-side overhead. We utilize Nvidia's \texttt{nvidia-smi} command-line utility to monitor real-time GPU memory usage.

\end{document}